\documentclass[10pt]{article}
\usepackage[accepted]{tmlr}

\usepackage[utf8]{inputenc}
\usepackage{amsmath,amssymb,amsfonts}
\usepackage{bm}
\usepackage{bbm}
\usepackage{graphicx}
\usepackage{booktabs}
\usepackage{xcolor}
\usepackage{tikz}
\usetikzlibrary{shapes,positioning}
\usepackage{pgfplots}
\pgfplotsset{compat=1.18}
\usepackage{subcaption}
 
\usepackage{url}

\usepackage{hyperref}

\definecolor{pairedlightblue}{HTML}{A6CEE3}
\definecolor{pairedblue}{HTML}{1F78B4}
\definecolor{pairedlightgreen}{HTML}{B2DF8A}
\definecolor{pairedgreen}{HTML}{33A02C}
\definecolor{pairedlightred}{HTML}{FB9A99}
\definecolor{pairedred}{HTML}{E31A1C}
\definecolor{pairedorange}{HTML}{FDBF6F}
\definecolor{paireddarkorange}{HTML}{FF7F00}
\definecolor{pairedpurple}{HTML}{CAB2D6}

\usepackage{algorithm}
\usepackage{algpseudocode}

\usepackage{amsthm}
\newtheorem{Proposition}{Proposition}
\newtheorem{Lemma}{Lemma}

\usepackage{etoolbox}
  
\newif\ifanonymous
\anonymousfalse 
 
\newcommand{\refthm}[1]{\hyperref[#1]{Proposition~\ref*{#1}}}
\newcommand{\refappx}[1]{\hyperref[#1]{Appendix~\ref*{#1}}}
\newcommand{\capsecref}[1]{\hyperref[#1]{Section~\ref*{#1}}}

\newcommand{\ent}[2][]{%
  \mathbb{H}%
  \ifblank{#1}{\left[#2\right]}{\csname #1l\endcsname[#2\csname #1r\endcsname]}%
}
\newcommand{\KL}[3][]{%
  \mathbb{D}_{\mathrm{KL}}%
  \ifblank{#1}{\left[#2 \| #3 \right]}{\csname #1l\endcsname[#2 \| #3 \csname #1r\endcsname]}%
}
\newcommand{\Ex}[3][]{%
  \mathbb{E}_{#2}%
  \ifblank{#1}{\left[#3\right]}{\csname #1l\endcsname[#3\csname #1r\endcsname]}%
}

\title{Expected~Free~Energy-based~Planning as Variational \\ Inference}

\author{\name Wouter W. L. Nuijten \email w.w.l.nuijten@tue.nl \\
      \addr Eindhoven University of Technology, Eindhoven, the Netherlands\\
      Lazy Dynamics B.V., the Netherlands
      \AND
      \name Thijs van de Laar \email t.w.v.d.laar@tue.nl \\
      \addr Eindhoven University of Technology, Eindhoven, the Netherlands
      \AND
      \name Bert de Vries \email bert.de.vries@tue.nl \\
      \addr Eindhoven University of Technology, Eindhoven, the Netherlands\\
      Lazy Dynamics B.V., the Netherlands}

\def\month{May}
\def\year{2026}
\def\openreview{\url{https://openreview.net/forum?id=Kzm8I1oS1s}} 

\begin{document}

\maketitle

\begin{abstract}
  Planning under uncertainty requires agents to balance goal achievement with information gathering. Active inference addresses this through the Expected Free Energy (EFE), a cost function that unifies instrumental and epistemic objectives. However, existing EFE-based methods typically employ specialized optimization procedures that are difficult to extend or analyze. In this paper, we show that EFE-based planning can be formulated as Variational Free Energy minimization on a generative model augmented with epistemic priors. Our main result demonstrates that minimizing a Variational Free Energy functional with appropriately chosen priors yields a decomposition into expected plan costs (the EFE) plus a complexity term. This formulation reinforces theoretical consistency with the Free Energy Principle by casting planning as the same inferential process that governs perception and learning. We validate our approach on three environments of increasing complexity: a deterministic T-maze, a stochastic Reactivity Maze, and a partially observable MiniGrid DoorKey-8x8 environment. The experiments demonstrate that the epistemic priors induce information-seeking behavior, that the variational formulation yields policy-based inference outperforming plan-based methods under stochastic transitions, and that temporal factorization enables scalability to environments where existing tabular active inference methods cannot operate.
\end{abstract}

\section{Introduction}\label{sec:introduction}

Agents operating in uncertain environments face a fundamental tension: should they exploit current knowledge to achieve immediate goals, or explore to gather information that may improve future decisions? Classical approaches in reinforcement learning and optimal control typically address this through value function estimation or policy learning \citep{sutton_reinforcement_2018,bertsekas_dynamic_2012,geffner_modelfree_2018}, but often treat reward maximization and uncertainty reduction as separate objectives, relying on heuristics to balance exploration and exploitation.

Active inference offers an alternative, grounded in the Free Energy Principle (FEP), which casts perception, learning, and action selection as inference processes that minimize a variational bound on surprise \citep{friston_freeenergy_2010,parr_active_2022}. Central to this framework is the Expected Free Energy (EFE), an objective that combines instrumental (goal-directed) and epistemic (information-seeking) components \citep{friston_active_2015}. Minimizing EFE yields behavior that simultaneously pursues preferred outcomes and resolves uncertainty.

However, existing implementations of EFE-based planning typically employ specialized optimization procedures, such as tree search or explicit enumeration over policies, that are difficult to extend or analyze using standard tools \citep{friston_sophisticated_2021,paul_predictive_2024}. This has limited both the theoretical understanding of EFE minimization and its practical scalability.

In this work, we show that EFE-based planning can be formulated as variational optimization. Our central result (\refthm{thm:EFE}) demonstrates that minimizing a Variational Free Energy (VFE) functional on a generative model augmented with epistemic priors yields a decomposition into expected plan costs (the EFE) plus a complexity term. This formulation achieves theoretical alignment with the FEP: planning emerges from the same variational principle that governs perception and learning. Our contributions are as follows:
\begin{itemize}
    \item We show that minimizing a Variational Free Energy functional with appropriately chosen epistemic priors yields EFE-based planning as a natural consequence (\refthm{thm:EFE}).
    \item We identify the specific epistemic priors that induce risk-minimizing, ambiguity-reducing, and novelty-seeking behavior.
    \item We provide empirical validation on a deterministic T-maze, a stochastic Reactivity Maze, and a partially observable MiniGrid DoorKey-8x8 environment, demonstrating that the epistemic priors are sufficient for information-seeking behavior, that the variational formulation yields policy-based inference that outperforms plan-based methods under stochastic transitions, and that temporal factorization enables scalability to standard reinforcement learning benchmarks.
\end{itemize}

The results demonstrate that the epistemic priors from \refthm{thm:EFE} are necessary and sufficient for inducing the information-gathering behavior that distinguishes active inference from standard planning-as-inference methods.

The remainder of this paper is organized as follows. \capsecref{sec:EFE-cost-function} defines the planning problem and introduces the Expected Free Energy as a cost function for plan evaluation. \capsecref{sec:related-work} reviews prior work on EFE minimization and planning as inference. Our central contribution is presented in \capsecref{sec:EFE-theorem}. \capsecref{sec:evaluation} provides empirical validation on the T-maze, Reactivity Maze, and MiniGrid DoorKey-8x8, and \capsecref{sec:discussion} discusses implications for policy inference and future directions.

\section{The Expected Free Energy Cost Function}\label{sec:EFE-cost-function}

\subsection{The Planning Problem}\label{sec:planning-problem}

In the context of sequential decision-making, planning is the process of determining an optimal course of action to achieve specific objectives. It is important to distinguish between two related but distinct concepts \citep{geffner_concise_2013,bertsekas_dynamic_2012,lavalle_planning_2006}:
\begin{itemize}
  \item A policy $\pi(u_t|x_{t-1})$ is a distribution over actions conditioned on states. Under the Markov assumption, the current state is a sufficient statistic for action selection.
  \item A plan $\bm{u} = (u_1, \ldots, u_T)$ is a fixed sequence of actions determined in advance.
\end{itemize}
Formally, sequential decision problems are often described within the framework of a Markov Decision Process (MDP), defined by the tuple
\begin{equation*}
  (\mathcal{X}, \mathcal{A}, p(x_0), p(x_{t+1}|x_t, u_t), R(x_t, u_t), T)\,,
\end{equation*}
where $\mathcal{X}$ is the state space, $\mathcal{A}$ is the action space, $p(x_0)$ is the initial state distribution, $p(x_{t+1}|x_t, u_t)$ represents the transition dynamics, $R(x_t, u_t)$ is the reward function, and $T$ is the planning horizon.

The classical planning objective is to find a policy $\pi$ that maximizes expected cumulative reward:
\begin{equation}\label{eq:classical-planning}
  \pi^* = \arg\max_\pi \Ex{\pi}{\sum_{t=1}^T R(x_t, u_t)}\,.
\end{equation}
While this formulation is standard in reinforcement learning and optimal control \citep[Equation 3.13]{sutton_reinforcement_2018}, it does not naturally accommodate epistemic objectives such as uncertainty reduction or information-seeking behavior.

\subsection{The Generative Model}\label{sec:generative-model}

Consider an agent described by a generative model $p(\bm{y}, \bm{x}, \bm{u}, \theta)$.\footnote{We use boldface to denote sequences of random variables, e.g., $\bm{x} = x_{1:T}$, while individual variables such as $x_t$ are not boldface.} In this paper, we are only concerned with planning, so we will assume that the model predicts a sequence of future observations. A typical example of this model would be a rollout of a state space model, for instance
\begin{align}\label{eq:generative-model}
  p(\bm{y}, \bm{x}, \bm{u}, \theta) = p(x_0) p(\theta)\underbrace{\prod_{t=1}^T p(y_t|x_t,\theta) p(x_t|x_{t-1},u_t,\theta) p(u_t)}_{\text{rollout to the future}}\,,
\end{align}
where $t=0$ denotes the current time and $p(x_0)$ is the prior over the current state.
In this model, $\bm{y}$ denotes the sequence of future observations, $\bm{x}$ represents the (latent) states, $\theta$ contains the model parameters, and $\bm{u}$ refers to the plan, i.e., a sequence of future actions (controls). Because all of these variables are defined as part of a model rollout into the future, they are all treated as unobserved variables. Since \eqref{eq:generative-model} is designed to predict how the future is expected to unfold, we refer to it as the generative model.

In model \eqref{eq:generative-model}, the prior distribution $p(\bm{u})$ constrains allowable plans.

\subsection{Planning-as-Inference}\label{sec:planning-as-inference}

An alternative to reward maximization replaces the reward function with a \emph{preference distribution} $\hat{p}(\bm{x})$ over desired future states \citep{levine_reinforcement_2018}. The preference distribution encodes the desired states, and can be understood as proportional to an exponentiated reward. Given the generative model \eqref{eq:generative-model} augmented with a preference distribution $\hat{p}(\bm{x})$, the planning objective becomes inferring $q(\bm{u})$, a posterior over plans encoding beliefs about which action sequences would efficiently guide the agent to these preferred states.\footnote{Since $q(\bm{u})$ is conditioned solely on priors, namely the generative model $p(\bm{y}, \bm{x}, \bm{u}, \theta)$ and a preference prior $\hat{p}(\bm{x})$, it would be more appropriate to speak about an updated prior rather than a posterior. For simplicity, in this paper, all distributions that result from inference are denoted by $q(\cdot)$ and termed posteriors.}

This perspective is called Planning-as-Inference (PAI) \citep{attias_planning_2003,toussaint_probabilistic_2006}: any procedure that employs (variational) Bayesian inference on the generative model to obtain $q(\bm{u})$, or a joint posterior from which plans or policies can be derived. The result is a distribution rather than a point estimate, naturally accommodating uncertainty in the planning process.

\subsection{The Expected Free Energy}\label{sec:efe}

Having established the generative model and the planning-as-inference formulation, we now turn to how candidate plans are evaluated. In the active inference literature, plans are evaluated by a cost function $G(\bm{u})$, known as the Expected Free Energy (EFE) \citep{dacosta_active_2020}, which is defined as
\begin{equation}\label{eq:G=r+a-n}
  G(\bm{u}) = \underbrace{\Ex{q}{\log \frac{q(\bm{x}|\bm{u})}{\hat{p}(\bm{x})}}}_{\text{risk}} +   \underbrace{\underbrace{\Ex{q}{\log\frac{1}{q(\bm{y}|\bm{x}, \theta)} }}_{\text{ambiguity}}  -  \underbrace{\Ex{q}{\log \frac{ q(\theta|\bm{y}, \bm{x})}{ q(\theta|\bm{x})} }}_{\text{novelty}}}_{\text{epistemic costs}}\,,
\end{equation}
where the expectations are with respect to $q = q(\bm{y}, \bm{x}, \theta|\bm{u})$, and $q$ is the result of a variational inference procedure (see \capsecref{sec:EFE-theorem} for more details).
Plans with a lower $G(\bm{u})$ value are regarded as more favorable, i.e., a priori more likely to be selected. Active inference processes are based on the Free Energy Principle, a theory that accounts for the behavior of living systems as if $G(\bm{u})$ were their planning cost function~\mbox{\citep{parr_active_2022,friston_freeenergy_2010,friston_free_2019}}. On a more practical level, we summarize this theory by discussing the incentives behind the three components of $G(\bm{u})$.
\begin{itemize}
  \item \textbf{Risk} refers to the KL-divergence between $q(\bm{x}|\bm{u})$, which is the distribution over the state trajectory that we expect to reach under plan $\bm{u}$, and the target state $\hat{p}(\bm{x})$. EFE minimization aligns with risk minimization.
  \item \textbf{Ambiguity} is the expected entropy $\Ex{q(\bm{x}, \theta|\bm{u})}{\ent{q(\bm{y}|\bm{x}, \theta)}}$ of future observations $\bm{y}$ under plan $\bm{u}$. Minimizing EFE biases inference toward state trajectories that predict a low-entropy distribution over future observations, which can be interpreted as a preference for states that lead to well-predicted observations.
  \item \textbf{Novelty} extends information-seeking behavior to include active parameter learning. Minimizing EFE results in plans that maximize the mutual information between observations $\bm{y}$ and parameters $\theta$ (averaged over states $\bm{x}$).
\end{itemize}

EFE minimization can be viewed as a unifying framework for planning under uncertainty, integrating principles from both decision theory and optimal control. 

Having established the EFE as a cost function for plan evaluation, we now review prior approaches to EFE minimization and planning as inference before presenting our main theoretical contribution.

\section{Related Work}\label{sec:related-work}

\subsection{Planning as Inference}\label{sec:pai}

The formulation of planning as probabilistic inference has a rich history. 
Classical optimal control \citep{bellman_theory_1954,bellman_dynamic_1966,pontryagin_mathematical_2018,bonet_planning_2001} provides a mathematical framework for determining control inputs that minimize a predefined cost function, while Model Predictive Control extends this to real-time settings through receding horizon strategies \citep{bertsekas_dynamic_2012,richalet_model_1978,cutler_dynamic_1979}.

A significant paradigm shift involves recasting these problems as inference. \citet{todorov_linearlysolvable_2006} demonstrated that a class of optimal control problems can be solved via inference in exponential family models, giving rise to KL control. \citet{rawlik_stochastic_2012} extended this perspective to stochastic optimal control, showing that the Bellman equation can be derived from variational inference principles. When dealing with stochastic dynamics or state estimation under uncertainty, stochastic optimal control methods can be reformulated using variational inference \citep{kappen_optimal_2012,ito_kullback_2022}, where intractable posteriors over states and controls are approximated by tractable variational distributions. The maximum causal entropy framework \citep{ziebart_modeling_2010} provides a principled foundation for entropy-regularized control, connecting optimal control to probabilistic inference through the principle of maximum entropy. Under these formulations, the backward propagation of value functions corresponds to message passing on a factor graph \citep{levine_reinforcement_2018}.

The Planning-as-Inference (PAI) framework, introduced by \citet{attias_planning_2003} and extended by \citet{toussaint_probabilistic_2006}, \citet{toussaint_robot_2009}, and \citet{solway_goaldirected_2012}, formalizes this perspective: the objective is to infer action trajectories consistent with prior preferences over outcomes, enabling the use of approximate inference techniques such as variational inference and message passing. However, these PAI formulations focus on maximizing expected utility and do not explicitly incorporate epistemic value, i.e., the drive to reduce uncertainty. As a result, their applicability in partially observable environments is limited, as they lack a principled mechanism for information-seeking behavior.

A known challenge in PAI formulations is the phenomenon of optimistic inference \citep{levine_reinforcement_2018}. When the goal is encoded as a conditioning event, standard inference procedures can yield posteriors that implicitly assume favorable outcomes will occur regardless of the chosen actions. This arises because conditioning on goal achievement biases the posterior toward trajectories in which stochastic transitions occur to realize the goal, even when such realizations are unlikely under the agent's actual control. \citet{lazaro-gredilla_what_2024} provided a comprehensive analysis of this issue and proposed a correction based on the observation that proper sequential decision-making should account for the fact that the environment noise can not be manipulated. Specifically, their formulation augments the variational objective with an entropy correction, which turns the variational objective into a negative expectation of a reward function, essentially turning the inference procedure into expected reward maximization.

\subsection{Active Inference}\label{sec:active-inference}

Active inference \citep{dacosta_active_2020,dacosta_active_2024,parr_active_2022} offers a complementary perspective grounded in the Free Energy Principle. Rather than treating reward maximization and uncertainty reduction as separate objectives, active inference proposes that information gain constitutes an intrinsic objective. The framework balances exploration and exploitation through the Expected Free Energy (EFE) \citep{friston_active_2015}, which combines epistemic value (the drive to reduce uncertainty) with instrumental value (the drive to achieve preferred outcomes).

Existing methods for EFE minimization typically employ specialized optimization procedures rather than inference. Sophisticated Inference \citep{friston_sophisticated_2021} evaluates policies through explicit tree search with recursive belief modeling; Branching Time Active Inference \citep{champion_branching_2022} extends this to handle counterfactual branches; and Dynamic Programming EFE (DPEFE) \citep{paul_predictive_2024,paul_efficient_2024} computes EFE recursively using dynamic programming principles. While these approaches offer computational strategies for EFE minimization, they rely on procedural algorithms for policy selection rather than deriving policy selection from inference. From the perspective of the Free Energy Principle, which posits that cognitive and behavioral processes emerge solely from Variational Free Energy minimization \citep{friston_freeenergy_2010}, policy selection should ideally arise through inference rather than through engineered algorithmic procedures.

\subsection{Unifying EFE with Variational Inference}\label{sec:unifying-efe-vi}

Several recent works have sought to reconcile EFE minimization with standard inference procedures. \citet{palmieri_unifying_2022} introduced a framework that unifies estimation and control via belief propagation on factor graphs. Building on this, \citet{koudahl_realising_2023,vandelaar_realizing_2024} proposed modifying the Variational Free Energy by subtracting a mutual information term when inference is performed over future segments of the factor graph, based on the Generalized Free Energy \citep{parr_generalised_2019}. This modification successfully allows message passing to account for both instrumental and epistemic value, yielding an interruptible inference procedure for policy evaluation.

A natural question is whether the same result can be achieved without requiring different objectives for different parts of the factor graph. This motivates the present work, which seeks a formulation where EFE-based planning emerges from a single, unified variational objective.

\section{EFE-based Planning as Variational Inference}\label{sec:EFE-theorem}

The central construction of this paper is the following result, which shows that EFE-based planning can be formulated as Variational Free Energy minimization without requiring modified cost functions or procedural algorithms. By augmenting the generative model with specific epistemic priors, the variational objective decomposes into expected plan costs (the EFE) plus a complexity term. The epistemic priors account for entropic contributions from conditional state, observation, and parameter beliefs, corresponding to the risk, ambiguity, and novelty terms, respectively.
\begin{Proposition}[EFE-based Planning as Variational Inference]\label{thm:EFE}
    Consider an agent with generative (predictive) model $p(\bm{y}, \bm{x}, \bm{u}, \theta)$ and prior beliefs $\hat{p}(\bm{x})$ about future desired states. 
    We define the Variational Free Energy functional $F[q]$ as
    \begin{equation}\label{eq:VFE-for-planning}
        F[q] \triangleq \Ex[Bigg]{q(\bm{y}, \bm{x}, \bm{u}, \theta)}{ \log \frac{ \overbrace{q(\bm{y}, \bm{x}, \bm{u}, \theta)}^{\text{posterior}} }{ \underbrace{p(\bm{y}, \bm{x}, \bm{u}, \theta)}_{\substack{\text{generative} \\ \text{model}} } \underbrace{\hat{p}(\bm{x})}_{\substack{\text{preference}\\ \text{prior}}} \underbrace{\tilde{p}(\bm{u}) \tilde{p}(\bm{x}) \tilde{p}(\bm{y}, \bm{x})}_{\text{epistemic priors}}} }\,,
    \end{equation}
    where the generative model in the denominator is augmented by both a preference prior $\hat{p}(\cdot)$ and epistemic priors $\tilde{p}(\cdot)$. Assume that the variational posterior  $q(\bm{y}, \bm{x}, \bm{u}, \theta)$ factorizes as 
    \begin{equation}\label{eq:posterior-factorization}
        q(\bm{y}, \bm{x}, \bm{u}, \theta) =  q(\bm{y}, \theta | \bm{x}) q(\bm{x}|\bm{u}) q(\bm{u})\,.
    \end{equation} 
    
    If the epistemic priors are chosen as
    \begin{subequations}\label{eq:epistemic-priors}
        \begin{align}
            \tilde{p}(\bm{u})         & \propto \exp(\ent{q(\bm{x}|\bm{u})})\,, \label{eq:epistemic-prior-u}                               \\
            \tilde{p}(\bm{x})         & \propto \exp(\Ex{q(\theta | \bm{x})}{-\ent{q(\bm{y} | \bm{x}, \theta)}})\,, \label{eq:epistemic-prior-x}                              \\
            \tilde{p}(\bm{y}, \bm{x}) & \propto \exp(\KL{ q(\theta|\bm{y}, \bm{x})}{ q(\theta|\bm{x})})\,,\label{eq:epistemic-prior-xy}
        \end{align}
    \end{subequations}
    then $F[q]$ decomposes as
    \begin{align}\label{eq:F=G+complexity}
        F[q] = \underbrace{\Ex{q(\bm{u})}{ G(\bm{u})}}_{\substack{ \text{expected plan} \\ \text{costs} }}  + \underbrace{\Ex{q(\bm{y}, \bm{x}, \bm{u}, \theta)}{\log \frac{q(\bm{y}, \bm{x}, \bm{u}, \theta)}{p(\bm{y}, \bm{x}, \bm{u}, \theta)}}}_{\text{complexity}} + \text{constant}\,,
    \end{align}
    where $G(\bm{u})$ is the Expected Free Energy as defined in \eqref{eq:G=r+a-n}.
\end{Proposition}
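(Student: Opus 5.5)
The plan is a direct algebraic decomposition of \eqref{eq:VFE-for-planning}. First I split the logarithm of the denominator into its factors:
\begin{equation*}
F[q] = \Ex{q}{\log \frac{q(\bm{y}, \bm{x}, \bm{u}, \theta)}{p(\bm{y}, \bm{x}, \bm{u}, \theta)}} - \Ex{q}{\log \hat{p}(\bm{x})} - \Ex{q}{\log \tilde{p}(\bm{u})} - \Ex{q}{\log \tilde{p}(\bm{x})} - \Ex{q}{\log \tilde{p}(\bm{y}, \bm{x})}\,.
\end{equation*}
The first term is already the complexity term in \eqref{eq:F=G+complexity}. I then substitute \eqref{eq:epistemic-priors} into the remaining four terms. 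The proportionality constants are the log-normalizers of the epistemic priors. They are treated as not depending on $q$ and are absorbed into the ``constant''. This is the convention implicit in the statement, and I will state it explicitly.

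Next I rewrite every remaining term as $\Ex{q(\bm{u})}{\cdot}$ of a function of $\bm{u}$, using the factorization \eqref{eq:posterior-factorization}. Under \eqref{eq:posterior-factorization} the conditional $q(\bm{y},\theta|\bm{x},\bm{u})$ equals $q(\bm{y},\theta|\bm{x})$, so $q(\bm{y},\bm{x},\theta|\bm{u}) = q(\bm{y},\theta|\bm{x})\,q(\bm{x}|\bm{u})$. This matches the expectation measure in \eqref{eq:G=r+a-n}. The terms then match the three parts of $G(\bm{u})$ as follows.

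\textbf{Risk.} The $\tilde p(\bm{u})$ term gives $-\Ex{q(\bm{u})}{\ent{q(\bm{x}|\bm{u})}} = \Ex{q(\bm{u})}{\Ex{q(\bm{x}|\bm{u})}{\log q(\bm{x}|\bm{u})}}$. Adding $-\Ex{q}{\log\hat p(\bm{x})}$ yields exactly $\Ex{q(\bm{u})}{\Ex{q}{\log \frac{q(\bm{x}|\bm{u})}{\hat{p}(\bm{x})}}}$.

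\textbf{Ambiguity.} The $\tilde p(\bm{x})$ term gives $\Ex{q(\bm{x})}{\Ex{q(\theta|\bm{x})}{\ent{q(\bm{y}|\bm{x},\theta)}}}$. Writing the entropy as $\Ex{q(\bm{y}|\bm{x},\theta)}{-\log q(\bm{y}|\bm{x},\theta)}$ collapses the nested expectation into $\Ex{q(\bm{y},\theta|\bm{x})}{\cdot}$. Since $q(\bm{y}|\bm{x},\theta)\,q(\theta|\bm{x}) = q(\bm{y},\theta|\bm{x})$, the result is $\Ex{q}{\log \frac{1}{q(\bm{y}|\bm{x},\theta)}}$.

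\textbf{Novelty.} The $\tilde p(\bm{y},\bm{x})$ term gives $-\Ex{q(\bm{y},\bm{x})}{\KL{q(\theta|\bm{y},\bm{x})}{q(\theta|\bm{x})}}$. Expanding the KL and using $q(\bm{y},\bm{x})\,q(\theta|\bm{y},\bm{x}) = q(\bm{y},\bm{x},\theta)$ gives $-\Ex{q}{\log \frac{q(\theta|\bm{y},\bm{x})}{q(\theta|\bm{x})}}$.

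In each case I then disintegrate $q(\bm{x}) = \int q(\bm{x}|\bm{u})q(\bm{u})$ to pull out the outer $\Ex{q(\bm{u})}{\cdot}$. Summing the three pieces gives $\Ex{q(\bm{u})}{G(\bm{u})}$ plus complexity plus constant, which is \eqref{eq:F=G+complexity}.

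The main obstacle is the bookkeeping of which conditional of $q$ appears where. The epistemic priors are defined through conditionals $q(\theta|\bm{x})$, $q(\bm{y}|\bm{x},\theta)$ and $q(\theta|\bm{y},\bm{x})$, all derived from the single factor $q(\bm{y},\theta|\bm{x})$. I must check that these derived conditionals are consistent, so that the chain-rule identities used above hold. I must also check that the marginal over $\bm{x}$ used in the expectations is the same object regardless of how $\bm{u}$ is integrated out; this is exactly where the assumption that $\bm{y},\theta$ are independent of $\bm{u}$ given $\bm{x}$ is used. A second subtlety is that the priors depend on $q$ itself. The claimed identity is an evaluation of $F$ at a given $q$, with the priors held fixed, so no differentiation through the priors is needed. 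I will flag that the normalizers are regarded as constants in that sense.
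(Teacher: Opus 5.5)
Your proposal is correct and matches the paper's proof in Appendix~\ref{appx:proof-EFE-theorem}: both are direct algebraic decompositions in which the factorization \eqref{eq:posterior-factorization} lets each epistemic prior produce exactly one of the risk, ambiguity, and novelty terms, with the log-normalizers absorbed into the constant. The only difference is bookkeeping. The paper first conditions on $\bm{u}$ and adds and subtracts the factors of $G(\bm{u})$ inside $C(\bm{u})$ to show the residual is constant, whereas you substitute the priors directly. Your caveat that the normalizers depend on $q$ and are constant only for the identity evaluated at a fixed $q$ states explicitly what the paper leaves implicit.
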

In \eqref{eq:epistemic-priors}, $\ent{q(\bm{x}|\bm{u})} = -\sum_{\bm{x}} q(\bm{x}|\bm{u}) \log q(\bm{x}|\bm{u})$ denotes the entropy functional, and $\KL{q(\theta|\bm{y}, \bm{x})}{q(\theta|\bm{x})} = \sum_{\theta} q(\theta|\bm{y}, \bm{x}) \log \frac{q(\theta|\bm{y}, \bm{x})}{q(\theta|\bm{x})}$ is the Kullback-Leibler divergence.
\begin{proof}
    The proof of \eqref{eq:F=G+complexity} is given in \refappx{appx:proof-EFE-theorem}.
\end{proof}

A key consequence of \eqref{eq:F=G+complexity} is that minimizing $F[q]$ leads to reducing the expected plan costs $\Ex{q(\bm{u})}{G(\bm{u})}$, while also balancing this with the drive to reduce complexity, which are the costs associated with changing beliefs.

\subsection{Optimal Posterior over Plans}\label{sec:optimal-posterior}

Starting from \eqref{eq:F-C-2} (see proof in \refappx{appx:proof-EFE-theorem}), we can compute the optimal $q^*(\bm{u})$ through
\begin{align}
    F[q] & = \Ex[Bigg]{q(\bm{u})}{ \log \frac{q(\bm{u})}{p(\bm{u})}
    + G(\bm{u}) +\underbrace{\Ex{q(\bm{y}, \bm{x}, \theta | \bm{u})} {\log \frac{q(\bm{y}, \bm{x}, \theta|\bm{u})}{p(\bm{y}, \bm{x}, \theta|\bm{u})}}}_{=C(\bm{u}) \text{ (complexity)}}} \notag \\
         & = \Ex{q(\bm{u})}{ \log \frac{q(\bm{u})}{\exp\big(-P(\bm{u}) -G(\bm{u}) - C(\bm{u})\big)}}
    \,, \label{eq:F-with-G-and-B}
\end{align}
where $P(\bm{u}) = -\log p(\bm{u})$ denotes the plan prior expressed as a cost function. Equation \eqref{eq:F-with-G-and-B} is a Kullback-Leibler divergence that is minimized for
\begin{align}
    q^*(\bm{u}) & = \arg\min_q F[q] \notag                                                   \\
                & = \sigma\big(-P(\bm{u}) - G(\bm{u}) - C(\bm{u}) \big)\,, \label{eq:q*-PGB}
\end{align}
where
\begin{equation}\label{eq:sigmoid}
    \sigma(a)_k = \frac{\exp(a_k)}{\sum_{k'} \exp(a_{k'})}
\end{equation}
is the softmax function.

Equation \eqref{eq:q*-PGB} is not new. A comparable formula for $q^*(\bm{u})$ can be found in Equation 2.1 of \citep{friston_sophisticated_2021}. A main contribution of this paper is to demonstrate that \eqref{eq:q*-PGB} can be obtained through variational optimization of an appropriately defined free energy functional $F[q]$. Importantly, while $q^*(\bm{u})$ is a distribution over plans (action sequences), the full joint posterior $q(\bm{y}, \bm{x}, \bm{u}, \theta)$ can be marginalized to obtain state-conditioned action beliefs $q(u_t|x_{t-1})$, effectively yielding a policy. This distinction is elaborated in \capsecref{sec:discussion}.

\section{Validation}\label{sec:evaluation}

This section validates the influence of epistemic priors on planning through three environments of increasing complexity: a deterministic T-maze, a stochastic Reactivity Maze, and a partially observable MiniGrid DoorKey-8x8.

\subsection{Environments}\label{sec:environments}

We evaluate on three environments that each isolate different aspects of the planning problem.

\subsubsection{T-maze}\label{sec:environment}

The T-maze (\autoref{fig:tmaze}) is a canonical benchmark for information-seeking behavior in active inference \citep{friston_active_2015}. Our variant consists of five locations: a starting position, a center junction, two goal arms (left and right), and a cue location below the junction. Unlike traditional formulations where agents can transition directly between any locations, we require deterministic movement between adjacent locations only, which increases the planning horizon required to gather information. \autoref{fig:tmaze} shows the initial state of the environment, with the reward location (green) on the left arm.

One of the two arms contains a reward; the rewarding arm is determined by a hidden context variable that remains fixed within each episode. At the cue location, the agent receives an observation that unambiguously reveals which arm contains the reward. At all other locations, observations are uninformative.

\begin{figure}[tb!]
    \centering
    \begin{tikzpicture}[scale=1.5]
  \fill[white] (-0.2, 0.5) rectangle (3.2, 4.5);

  \fill[white] (1, 1) rectangle (2, 3);  
  \fill[white] (0, 3) rectangle (3, 4);  

  \draw[black, line width=1.5pt, line cap=round]
    (1, 1) -- (1, 3)
    (2, 1) -- (2, 3)
    (0, 4) -- (3, 4)
    (0, 3) -- (1, 3)
    (2, 3) -- (3, 3)
    (1, 1) -- (2, 1)
    (0, 3) -- (0, 4)
    (3, 3) -- (3, 4);

  \draw[black, line width=0.3pt, opacity=0.4]
    (1, 2) -- (2, 2)
    (1, 3) -- (2, 3)
    (1, 3) -- (1, 4)
    (2, 3) -- (2, 4);

  \fill[pairedgreen, opacity=0.7] (0.5, 3.5) circle (0.2);
  \draw[black, line width=1pt] (0.5, 3.5) circle (0.2);
  \fill[pairedred, opacity=0.7] (2.5, 3.5) circle (0.2);
  \draw[black, line width=1pt] (2.5, 3.5) circle (0.2);

  \fill[pairedorange, opacity=0.7] (1.5, 1.5) circle (0.2);
  \draw[black, line width=1pt] (1.5, 1.5) circle (0.2);

  \fill[pairedblue] (1.5, 2.5) circle (0.13);
  \draw[black, line width=1pt] (1.5, 2.5) circle (0.13);

  \node[anchor=west, font=\footnotesize] at (-0.1, 0.75) {
    \tikz{\fill[pairedblue] (0,0) circle (0.15); \draw[black, line width=0.5pt] (0,0) circle (0.15);} Agent
    \quad
    \tikz{\fill[pairedorange, opacity=0.7] (0,0) circle (0.15); \draw[black, line width=0.5pt] (0,0) circle (0.15);} Cue
    \quad
    \tikz{\fill[pairedgreen, opacity=0.7] (0,0) circle (0.15); \draw[black, line width=0.5pt] (0,0) circle (0.15);} Reward
    \quad
    \tikz{\fill[pairedred, opacity=0.7] (0,0) circle (0.15); \draw[black, line width=0.5pt] (0,0) circle (0.15);} Loss
  };
\end{tikzpicture}
    \caption{T-maze environment. The agent starts at the center junction and must reach the rewarding arm. Visiting the cue location (orange) provides an observation that reveals the reward location. The two arms (green and red) represent potential goal states.}
    \label{fig:tmaze}
\end{figure}

\subsubsection{Reactivity Maze}\label{sec:epistemic-maze}

The T-maze has deterministic transitions, so it cannot distinguish plan-based from policy-based inference. To validate this distinction, we introduce the Reactivity Maze, inspired by the reactivity environment of \citet[Appendix~I]{lazaro-gredilla_what_2024}, extended with epistemic uncertainty over multiple goal states.

The environment has two state entities. The first entity $x_t^{(1)} \in \{0, 1, 2, 3, 4, 5, 6\}$ describes the agent's location, where states $0$--$4$ are navigation states, state $5$ is an absorbing safe sink, and state $6$ is an instructional cue. The second entity $x_t^{(2)} \in \{0, 1, 2, 3, 4\}$ is a reactivity knob that governs the stochasticity of transitions. There are 8 actions: actions $0$--$4$ are navigation actions, actions $5$ and $6$ decrease and increase the knob, and action $7$ visits the cue. A latent context parameter $\theta \in \{0, 1\}$ determines which navigation state is the optimal goal.

The knob entity has deterministic dynamics:
\begin{equation}\label{eq:knob-dynamics}
    x_{t+1}^{(2)} = \begin{cases}
        \max(0,\; x_t^{(2)} - 1) & \text{if } a_t = 5\,, \\
        \min(4,\; x_t^{(2)} + 1) & \text{if } a_t = 6\,, \\
        x_t^{(2)}                & \text{otherwise}\,.
    \end{cases}
\end{equation}
The knob value controls the stochasticity of the location entity. For navigation actions $0 \leq a_t \leq 4$ at non-special locations ($x_t^{(1)} \notin \{5, 6\}$):
\begin{equation}\label{eq:location-dynamics}
    x_{t+1}^{(1)} = \begin{cases}
        (x_t^{(1)} + a_t) \bmod 5 & \text{with probability } x_t^{(2)} / 4\,,     \\
        5 \;\text{(safe sink)}    & \text{with probability } 1 - x_t^{(2)} / 4\,.
    \end{cases}
\end{equation}
Knob actions ($a_t \in \{5, 6\}$) follow the same stochastic pattern: with probability $x_t^{(2)}/4$ the agent remains at its current location, otherwise it falls to the safe sink. Action $7$ deterministically moves the agent to the cue location. At the cue ($x_t^{(1)} = 6$), any action returns the agent to a uniformly random navigation state. The safe sink ($x_t^{(1)} = 5$) is absorbing.

At the cue location, the agent receives an observation that deterministically reveals $\theta$; at all other locations, observations are uninformative. Additionally, the action prior assigns a lower probability to the cue action ($p(a_t = 7) \propto 1/\epsilon$ with $\epsilon = 1.2$), making information-gathering explicitly costly: the agent must overcome this prior penalty to visit the cue. Reward is delivered only at the final timestep:
\begin{equation}\label{eq:epistemic-maze-reward}
    R_T(x_T^{(1)}, x_T^{(2)}) = \begin{cases}
        +1.0  & \text{if } x_T^{(1)} = \theta \text{ and } x_T^{(2)} = 4\,,                     \\
        -1.0  & \text{if } x_T^{(1)} \neq \theta,\; x_T^{(1)} < 5 \text{ and } x_T^{(2)} = 4\,, \\
        +0.33 & \text{if } x_T^{(1)} = 5\,,                                                     \\
        -0.33 & \text{if } x_T^{(1)} \neq 5 \text{ and } x_T^{(2)} < 4\,.
    \end{cases}
\end{equation}
The reward structure creates a tension: maximum reward ($+1.0$) requires maintaining full reactivity ($x_T^{(2)} = 4$) and reaching the correct goal ($x_T^{(1)} = \theta$), but at full reactivity the action that reaches the goal depends on the current location, requiring the agent to \emph{react} to its state. By reducing the knob, the agent can guarantee reaching the safe sink ($+0.33$) without needing reactivity, but at the cost of a lower reward. The instructional cue resolves $\theta$ but costs a timestep and carries a prior penalty, adding a further exploration--exploitation tradeoff. Optimal behavior thus requires \emph{both} reactive policies and epistemic learning.

\subsubsection{MiniGrid DoorKey}\label{sec:minigrid}

To evaluate whether the variational formulation scales beyond the small state spaces of the previous experiments, we test on the DoorKey-8x8 environment from the MiniGrid suite \citep{chevalier-boisvert_minigrid_2023}. The agent operates in a $6 \times 6$ internal grid ($8 \times 8$ including grid walls) divided into two rooms by a wall with a locked door. It must find a key, pick it up, unlock the door, and navigate to the goal location. The environment is partially observable: the agent perceives only a $3 \times 3$ field of view centered on its position, reduced from the standard $7 \times 7$ configuration. This limited visibility means the agent cannot see the full room and must actively explore to locate the key and door, making the task substantially harder than the standard configuration.

\subsection{Generative Model}\label{sec:model-specification}

The agent maintains a generative model of the form \eqref{eq:generative-model} for planning over a horizon of $T=4$ timesteps. The model components are specified as follows.
\begin{itemize}
    \item \textbf{States} $x_t \in \mathcal{X}$: The five locations, with $|\mathcal{X}| = 5$.
    \item \textbf{Actions} $u_t \in \mathcal{U}$: Four movement directions (up, down, left, right), with $|\mathcal{U}| = 4$. The transition model $p(x_t | x_{t-1}, u_t)$ encodes deterministic adjacency-respecting movement.
    \item \textbf{Observations} $y_t \in \mathcal{Y}$: Binary observations with $|\mathcal{Y}| = 2$. The observation model $p(y_t | x_t, \theta)$ yields an informative observation (deterministically revealing $\theta$) when $x_t$ is the cue location, and a uniform distribution over observation values otherwise.
    \item \textbf{Context parameter} $\theta \in \{\text{left}, \text{right}\}$: Indicates which arm contains the reward. The prior $p(\theta)$ reflects the agent's current belief, initialized to uniform and updated through state inference.
    \item \textbf{Preference prior}: The preference distribution factorizes as $\hat{p}(\bm{x}) = \hat{p}(x_T) \prod_{t=1}^{T-1} \hat{p}(x_t)$, where intermediate states have uniform preferences $\hat{p}(x_t) \propto 1$, and the terminal preference is $\hat{p}(x_T | \theta) = \sigma(\gamma \cdot \mathbbm{1}[x_T = \theta])$ with $\gamma = 2$, placing high probability on reaching the arm matching the reward location.
\end{itemize}
The generative models for the Reactivity Maze and MiniGrid DoorKey follow directly from their respective environment specifications. For MiniGrid DoorKey, the context parameter $\theta$ encodes the locations of the key and door, while the state $x_t$ comprises the agent's position, orientation, and whether the key has been picked up or the door has been opened.

\subsection{Comparison of Objectives}\label{sec:comparison-objectives}

To isolate the contribution of epistemic priors, we define three objectives for comparison, all formulated as VFE minimization over a joint posterior $q(\bm{y}, \bm{x}, \bm{u}, \theta)$:
\begin{align}
    F_{\text{marginal}}[q] & = \Ex{q(\bm{y}, \bm{x}, \bm{u}, \theta)}{\log \frac{q(\bm{y}, \bm{x}, \bm{u}, \theta)}{p(\bm{y}, \bm{x}, \bm{u}, \theta) \hat{p}(\bm{x})}}\,, \label{eq:F-marginal}                                                             \\
    F_{\text{planning}}[q] & = F_{\text{marginal}}[q] + \sum_{t=1}^{T} \left(\ent{q(x_{t-1}, u_t)} - \ent{q(x_{t-1})}\right)\,, \label{eq:F-planning}                                                                                                        \\
    F_{\text{active}}[q]   & = \Ex{q(\bm{y}, \bm{x}, \bm{u}, \theta)}{\log \frac{q(\bm{y}, \bm{x}, \bm{u}, \theta)}{p(\bm{y}, \bm{x}, \bm{u}, \theta) \hat{p}(\bm{x}) \tilde{p}(\bm{x}) \tilde{p}(\bm{u}) \tilde{p}(\bm{y}, \bm{x})}}\,. \label{eq:F-active}
\end{align}

Equation~\eqref{eq:F-marginal} is standard VFE with preference prior, consistent with KL control \citep[Equation 3]{todorov_linearlysolvable_2006,rawlik_stochastic_2012}. Equation~\eqref{eq:F-planning} applies the entropy correction from \citet[Equation 4]{lazaro-gredilla_what_2024} to account for sequential decision structure and addresses the optimistic planning tendency of KL control by accounting for aleatoric uncertainty in state transitions \citep{levine_reinforcement_2018}. Equation~\eqref{eq:F-active} augments VFE with the full set of epistemic priors from \refthm{thm:EFE}. The main question is whether these epistemic priors provide additional value by inducing information-seeking behavior that anticipates the reduction of epistemic uncertainty.

\subsection{Inference Procedure}\label{sec:inference-procedure}

Planning is performed by minimizing the Variational Free Energy over a joint posterior $q(\bm{y}, \bm{x}, \bm{u}, \theta) = q(\bm{y}, \theta |  \bm{x}) q(\bm{x} | \bm{u}) q(\bm{u})$, respecting the factorization assumption in \refthm{thm:EFE}, using the Adam optimizer \citep{kingma_adam_2015}, implemented in JAX \citep{bradbury_jax_2018}. The variational distribution is represented as a tabular categorical distribution, with scalability implications discussed in \capsecref{sec:scalability}. Optimization runs for 5000 iterations per planning step (see \refappx{appx:convergence} for convergence diagnostics).

At each timestep, the agent receives an observation from the environment and updates its belief over the current state and context parameter $\theta$. Since the cue observation deterministically reveals $\theta$ and observations elsewhere reveal the current location, state inference reduces to direct belief updates. The agent then performs planning by minimizing the VFE objective over horizon $T$, and executes the action with highest marginal probability:
\begin{equation}
    \hat{u}_{t+1} = \arg\max_{u_{t+1}} q(u_{t+1})\,.
\end{equation}
This receding-horizon scheme replans at each timestep, incorporating new observations. Episodes terminate upon reaching a goal arm, with a maximum of $T$ steps.%
\ifanonymous
    \footnote{Code is available at \url{https://anonymous.4open.science/r/EpistemicPriorsTMazeJAX-053C}.}
\else
    \footnote{Code is available at \url{https://github.com/biaslab/EpistemicPriorsGradientDescentJAX}.}
\fi

All three VFE objectives are compared on each environment, alongside Sophisticated Inference and Standard EFE planning \citep{friston_sophisticated_2021} implemented using pymdp \citep{heins_pymdp_2022} where applicable.

\paragraph{Reactivity Maze.} Unlike the T-maze, which uses a full tabular joint posterior that scales exponentially with the horizon $T$, the Reactivity Maze employs a temporal (Markovian) factorization:
\begin{equation}\label{eq:temporal-factorization}
    q(\bm{y}, \bm{x}, \bm{u}, \theta) = q(\theta) \prod_{t=1}^{T} q(u_t | x_{t-1})\, q(x_t | x_{t-1}, u_t, \theta)\, q(y_t | x_t, \theta)\,,
\end{equation}
where each factor is a variational distribution optimized through VFE minimization. This factorization scales linearly with the horizon and naturally represents reactive policies $q(u_t | x_{t-1})$ that condition actions on the current state. The inference procedure otherwise follows the T-maze setup, with 1000 optimization steps per planning step over a horizon of $T = 3$ and a maximum of 5 environment steps per episode.

\paragraph{MiniGrid DoorKey.} The state and observation spaces of this environment are too large for tabular enumeration. Standard EFE planning and Sophisticated Inference, as implemented in pymdp \citep{heins_pymdp_2022}, require explicit enumeration of all states and observations, and cannot operate in this setting. We therefore compare only the three VFE objectives, using the same temporal factorization as the Reactivity Maze (\autoref{eq:temporal-factorization}). Optimization uses a planning horizon of $T = 20$, with 3000 gradient steps per planning step, a learning rate of 0.01, and a maximum of 35 environment steps per episode over 100 episodes.

\subsection{Results}\label{sec:results}

For proportions we report 95\% confidence intervals; for continuous values we report $\pm 1$ standard deviation.

\subsubsection{T-maze}\label{sec:tmaze-results}

We evaluate each objective over 200 episodes with the reward location $\theta$ sampled uniformly. \autoref{tab:results} summarizes the results.

\begin{table}[tb!]
    \centering
    \caption{Performance comparison on the T-maze (200 episodes). Brackets denote 95\% CIs.}
    \label{tab:results}
    \begin{tabular}{lcccc}
        \toprule
        Method                     & Success Rate             & Mean Reward           & Cue Visit Rate           & Avg. Steps    \\
        \midrule
        $F_{\text{active}}$ (ours) & \textbf{100\%} [98, 100] & $\bm{+1.00} \pm 0.00$ & \textbf{100\%} [98, 100] & $4.0 \pm 0.0$ \\
        $F_{\text{marginal}}$      & 48\% [40, 55]            & $-0.05 \pm 1.00$      & 0\% [0, 2]               & $4.0 \pm 0.0$ \\
        $F_{\text{planning}}$      & 52\% [45, 60]            & $+0.05 \pm 1.00$      & 0\% [0, 2]               & $3.0 \pm 0.0$ \\
        Sophisticated Inference    & \textbf{100\%} [98, 100] & $\bm{+1.00} \pm 0.00$ & \textbf{100\%} [98, 100] & $4.0 \pm 0.0$ \\
        Standard EFE planning      & \textbf{100\%} [98, 100] & $\bm{+1.00} \pm 0.00$ & \textbf{100\%} [98, 100] & $4.0 \pm 0.0$ \\
        \bottomrule
    \end{tabular}
\end{table}

All three methods with epistemic terms ($F_{\text{active}}$, Sophisticated Inference, and Standard EFE planning) achieve perfect performance, visiting the cue in every episode and always reaching the correct arm.
In this deterministic setting, plan-based and policy-based methods are equally effective; the distinction between them only emerges under stochastic transitions (\capsecref{sec:epistemic-maze-results}).

Both $F_{\text{marginal}}$ and $F_{\text{planning}}$ achieve complementary success rates that sum to 100\%, reflecting that each consistently commits to a different arm without gathering information.
This asymmetry is a numerical artifact of the optimization; the key observation is that neither objective visits the cue.
Without epistemic priors, both objectives yield posterior action distributions that are approximately uniform over the non-downward actions, providing no mechanism to anticipate that visiting the cue would resolve uncertainty about $\theta$.

\autoref{fig:trajectories} illustrates representative trajectories. Each frame shows the agent's current position (blue circle) along with its complete plan: arrows at all locations indicate the marginal action probabilities $q(u_t | x_{t-1})$ the agent assigns to future timesteps, with opacity proportional to probability. This visualization reveals what the agent intends to do not only at its current location, but also at states it anticipates visiting later in the episode.

\begin{figure}[tb!]
    \centering
    \begin{subfigure}[b]{\textwidth}
        \centering
        \begin{tikzpicture}
            \node at (0,0) {\scalebox{0.45}{\begin{tikzpicture}[scale=1.5]
  \fill[white] (-0.2, 0.5) rectangle (3.2, 4.5);

  \fill[white] (1, 1) rectangle (2, 3);  
  \fill[white] (0, 3) rectangle (3, 4);  

  \draw[black, line width=1.5pt, line cap=round]
    (1, 1) -- (1, 3)
    (2, 1) -- (2, 3)
    (0, 4) -- (3, 4)
    (0, 3) -- (1, 3)
    (2, 3) -- (3, 3)
    (1, 1) -- (2, 1)
    (0, 3) -- (0, 4)
    (3, 3) -- (3, 4);

  \draw[black, line width=0.3pt, opacity=0.4]
    (1, 2) -- (2, 2)
    (1, 3) -- (2, 3)
    (1, 3) -- (1, 4)
    (2, 3) -- (2, 4);

  \fill[pairedgreen, opacity=0.7] (0.5, 3.5) circle (0.2);
  \draw[black, line width=1pt] (0.5, 3.5) circle (0.2);
  \fill[pairedred, opacity=0.7] (2.5, 3.5) circle (0.2);
  \draw[black, line width=1pt] (2.5, 3.5) circle (0.2);

  \fill[pairedorange, opacity=0.7] (1.5, 1.5) circle (0.2);
  \draw[black, line width=1pt] (1.5, 1.5) circle (0.2);

  \fill[pairedblue] (1.5, 2.5) circle (0.13);
  \draw[black, line width=1pt] (1.5, 2.5) circle (0.13);

  \draw[->, paireddarkorange, line width=1.5pt, opacity=0.44] (1.5, 2.5) -- (1.5, 2.85);
  \draw[->, paireddarkorange, line width=1.5pt, opacity=0.08] (1.5, 2.5) -- (1.5, 2.15);
  \draw[->, paireddarkorange, line width=1.5pt, opacity=0.03] (1.5, 1.5) -- (1.5, 1.85);
  \draw[->, paireddarkorange, line width=1.5pt, opacity=0.21] (1.5, 2.5) -- (1.5, 2.85);
  \draw[->, paireddarkorange, line width=1.5pt, opacity=0.04] (1.5, 2.5) -- (1.5, 2.15);
  \draw[->, paireddarkorange, line width=1.5pt, opacity=0.11] (1.5, 3.5) -- (1.85, 3.5);
  \draw[->, paireddarkorange, line width=1.5pt, opacity=0.11] (1.5, 3.5) -- (1.15, 3.5);
  \draw[->, paireddarkorange, line width=1.5pt, opacity=0.02] (1.5, 1.5) -- (1.5, 1.85);
  \draw[->, paireddarkorange, line width=1.5pt, opacity=0.14] (1.5, 2.5) -- (1.5, 2.85);
  \draw[->, paireddarkorange, line width=1.5pt, opacity=0.01] (1.5, 2.5) -- (1.5, 2.15);
  \draw[->, paireddarkorange, line width=1.5pt, opacity=0.09] (1.5, 3.5) -- (1.85, 3.5);
  \draw[->, paireddarkorange, line width=1.5pt, opacity=0.09] (1.5, 3.5) -- (1.15, 3.5);
  \draw[->, paireddarkorange, line width=1.5pt, opacity=0.01] (1.5, 1.5) -- (1.5, 1.85);
  \draw[->, paireddarkorange, line width=1.5pt, opacity=0.02] (1.5, 2.5) -- (1.5, 2.85);
  \draw[->, paireddarkorange, line width=1.5pt, opacity=0.00] (1.5, 2.5) -- (1.5, 2.15);
  \draw[->, paireddarkorange, line width=1.5pt, opacity=0.16] (1.5, 3.5) -- (1.85, 3.5);
  \draw[->, paireddarkorange, line width=1.5pt, opacity=0.16] (1.5, 3.5) -- (1.15, 3.5);
\end{tikzpicture}}};
            \node at (2.4,0) {\scalebox{0.45}{\begin{tikzpicture}[scale=1.5]
  \fill[white] (-0.2, 0.5) rectangle (3.2, 4.5);

  \fill[white] (1, 1) rectangle (2, 3);  
  \fill[white] (0, 3) rectangle (3, 4);  

  \draw[black, line width=1.5pt, line cap=round]
    (1, 1) -- (1, 3)
    (2, 1) -- (2, 3)
    (0, 4) -- (3, 4)
    (0, 3) -- (1, 3)
    (2, 3) -- (3, 3)
    (1, 1) -- (2, 1)
    (0, 3) -- (0, 4)
    (3, 3) -- (3, 4);

  \draw[black, line width=0.3pt, opacity=0.4]
    (1, 2) -- (2, 2)
    (1, 3) -- (2, 3)
    (1, 3) -- (1, 4)
    (2, 3) -- (2, 4);

  \fill[pairedgreen, opacity=0.7] (0.5, 3.5) circle (0.2);
  \draw[black, line width=1pt] (0.5, 3.5) circle (0.2);
  \fill[pairedred, opacity=0.7] (2.5, 3.5) circle (0.2);
  \draw[black, line width=1pt] (2.5, 3.5) circle (0.2);

  \fill[pairedorange, opacity=0.7] (1.5, 1.5) circle (0.2);
  \draw[black, line width=1pt] (1.5, 1.5) circle (0.2);

  \fill[pairedblue] (1.5, 3.5) circle (0.13);
  \draw[black, line width=1pt] (1.5, 3.5) circle (0.13);

  \draw[->, paireddarkorange, line width=1.5pt, opacity=0.29] (1.5, 3.5) -- (1.85, 3.5);
  \draw[->, paireddarkorange, line width=1.5pt, opacity=0.29] (1.5, 3.5) -- (1.15, 3.5);
  \draw[->, paireddarkorange, line width=1.5pt, opacity=0.04] (1.5, 2.5) -- (1.5, 2.85);
  \draw[->, paireddarkorange, line width=1.5pt, opacity=0.00] (1.5, 2.5) -- (1.5, 2.15);
  \draw[->, paireddarkorange, line width=1.5pt, opacity=0.09] (1.5, 3.5) -- (1.85, 3.5);
  \draw[->, paireddarkorange, line width=1.5pt, opacity=0.09] (1.5, 3.5) -- (1.15, 3.5);
  \draw[->, paireddarkorange, line width=1.5pt, opacity=0.01] (1.5, 2.5) -- (1.5, 2.85);
  \draw[->, paireddarkorange, line width=1.5pt, opacity=0.00] (1.5, 2.5) -- (1.5, 2.15);
  \draw[->, paireddarkorange, line width=1.5pt, opacity=0.13] (1.5, 3.5) -- (1.85, 3.5);
  \draw[->, paireddarkorange, line width=1.5pt, opacity=0.13] (1.5, 3.5) -- (1.15, 3.5);
\end{tikzpicture}}};
            \node at (4.8,0) {\scalebox{0.45}{\begin{tikzpicture}[scale=1.5]
  \fill[white] (-0.2, 0.5) rectangle (3.2, 4.5);

  \fill[white] (1, 1) rectangle (2, 3);  
  \fill[white] (0, 3) rectangle (3, 4);  

  \draw[black, line width=1.5pt, line cap=round]
    (1, 1) -- (1, 3)
    (2, 1) -- (2, 3)
    (0, 4) -- (3, 4)
    (0, 3) -- (1, 3)
    (2, 3) -- (3, 3)
    (1, 1) -- (2, 1)
    (0, 3) -- (0, 4)
    (3, 3) -- (3, 4);

  \draw[black, line width=0.3pt, opacity=0.4]
    (1, 2) -- (2, 2)
    (1, 3) -- (2, 3)
    (1, 3) -- (1, 4)
    (2, 3) -- (2, 4);

  \fill[pairedgreen, opacity=0.7] (0.5, 3.5) circle (0.2);
  \draw[black, line width=1pt] (0.5, 3.5) circle (0.2);
  \fill[pairedred, opacity=0.7] (2.5, 3.5) circle (0.2);
  \draw[black, line width=1pt] (2.5, 3.5) circle (0.2);

  \fill[pairedorange, opacity=0.7] (1.5, 1.5) circle (0.2);
  \draw[black, line width=1pt] (1.5, 1.5) circle (0.2);

  \fill[pairedblue] (1.5, 3.5) circle (0.13);
  \draw[black, line width=1pt] (1.5, 3.5) circle (0.13);

  \draw[->, paireddarkorange, line width=1.5pt, opacity=0.33] (1.5, 3.5) -- (1.85, 3.5);
  \draw[->, paireddarkorange, line width=1.5pt, opacity=0.33] (1.5, 3.5) -- (1.15, 3.5);
  \draw[->, paireddarkorange, line width=1.5pt, opacity=0.01] (1.5, 2.5) -- (1.5, 2.85);
  \draw[->, paireddarkorange, line width=1.5pt, opacity=0.00] (1.5, 2.5) -- (1.5, 2.15);
  \draw[->, paireddarkorange, line width=1.5pt, opacity=0.11] (1.5, 3.5) -- (1.85, 3.5);
  \draw[->, paireddarkorange, line width=1.5pt, opacity=0.11] (1.5, 3.5) -- (1.15, 3.5);
\end{tikzpicture}}};
            \node at (7.2,0) {\scalebox{0.45}{\begin{tikzpicture}[scale=1.5]
  \fill[white] (-0.2, 0.5) rectangle (3.2, 4.5);

  \fill[white] (1, 1) rectangle (2, 3);  
  \fill[white] (0, 3) rectangle (3, 4);  

  \draw[black, line width=1.5pt, line cap=round]
    (1, 1) -- (1, 3)
    (2, 1) -- (2, 3)
    (0, 4) -- (3, 4)
    (0, 3) -- (1, 3)
    (2, 3) -- (3, 3)
    (1, 1) -- (2, 1)
    (0, 3) -- (0, 4)
    (3, 3) -- (3, 4);

  \draw[black, line width=0.3pt, opacity=0.4]
    (1, 2) -- (2, 2)
    (1, 3) -- (2, 3)
    (1, 3) -- (1, 4)
    (2, 3) -- (2, 4);

  \fill[pairedgreen, opacity=0.7] (0.5, 3.5) circle (0.2);
  \draw[black, line width=1pt] (0.5, 3.5) circle (0.2);
  \fill[pairedred, opacity=0.7] (2.5, 3.5) circle (0.2);
  \draw[black, line width=1pt] (2.5, 3.5) circle (0.2);

  \fill[pairedorange, opacity=0.7] (1.5, 1.5) circle (0.2);
  \draw[black, line width=1pt] (1.5, 1.5) circle (0.2);

  \fill[pairedblue] (1.5, 3.5) circle (0.13);
  \draw[black, line width=1pt] (1.5, 3.5) circle (0.13);

  \draw[->, paireddarkorange, line width=1.5pt, opacity=0.48] (1.5, 3.5) -- (1.85, 3.5);
  \draw[->, paireddarkorange, line width=1.5pt, opacity=0.48] (1.5, 3.5) -- (1.15, 3.5);
\end{tikzpicture}}};
            \node at (9.6,0) {\scalebox{0.45}{\begin{tikzpicture}[scale=1.5]
  \fill[white] (-0.2, 0.5) rectangle (3.2, 4.5);

  \fill[white] (1, 1) rectangle (2, 3);  
  \fill[white] (0, 3) rectangle (3, 4);  

  \draw[black, line width=1.5pt, line cap=round]
    (1, 1) -- (1, 3)
    (2, 1) -- (2, 3)
    (0, 4) -- (3, 4)
    (0, 3) -- (1, 3)
    (2, 3) -- (3, 3)
    (1, 1) -- (2, 1)
    (0, 3) -- (0, 4)
    (3, 3) -- (3, 4);

  \draw[black, line width=0.3pt, opacity=0.4]
    (1, 2) -- (2, 2)
    (1, 3) -- (2, 3)
    (1, 3) -- (1, 4)
    (2, 3) -- (2, 4);

  \fill[pairedgreen, opacity=0.7] (0.5, 3.5) circle (0.2);
  \draw[black, line width=1pt] (0.5, 3.5) circle (0.2);
  \fill[pairedred, opacity=0.7] (2.5, 3.5) circle (0.2);
  \draw[black, line width=1pt] (2.5, 3.5) circle (0.2);

  \fill[pairedorange, opacity=0.7] (1.5, 1.5) circle (0.2);
  \draw[black, line width=1pt] (1.5, 1.5) circle (0.2);

  \fill[pairedblue] (0.5, 3.5) circle (0.13);
  \draw[black, line width=1pt] (0.5, 3.5) circle (0.13);
\end{tikzpicture}}};
            \node at (0,-1.5) {\small $t=0$};
            \node at (2.4,-1.5) {\small $t=1$};
            \node at (4.8,-1.5) {\small $t=2$};
            \node at (7.2,-1.5) {\small $t=3$};
            \node at (9.6,-1.5) {\small $t=4$};
        \end{tikzpicture}
        \caption{$F_{\text{marginal}}$: The agent proceeds toward an arm. The plan shows no preference for visiting the cue; without epistemic priors, the agent cannot anticipate the value of information.}
        \label{fig:traj-marginal}
    \end{subfigure}

    \vspace{0.5em}

    \begin{subfigure}[b]{\textwidth}
        \centering
        \begin{tikzpicture}
            \node at (0,0) {\scalebox{0.45}{\begin{tikzpicture}[scale=1.5]
  \fill[white] (-0.2, 0.5) rectangle (3.2, 4.5);

  \fill[white] (1, 1) rectangle (2, 3);  
  \fill[white] (0, 3) rectangle (3, 4);  

  \draw[black, line width=1.5pt, line cap=round]
    (1, 1) -- (1, 3)
    (2, 1) -- (2, 3)
    (0, 4) -- (3, 4)
    (0, 3) -- (1, 3)
    (2, 3) -- (3, 3)
    (1, 1) -- (2, 1)
    (0, 3) -- (0, 4)
    (3, 3) -- (3, 4);

  \draw[black, line width=0.3pt, opacity=0.4]
    (1, 2) -- (2, 2)
    (1, 3) -- (2, 3)
    (1, 3) -- (1, 4)
    (2, 3) -- (2, 4);

  \fill[pairedgreen, opacity=0.7] (0.5, 3.5) circle (0.2);
  \draw[black, line width=1pt] (0.5, 3.5) circle (0.2);
  \fill[pairedred, opacity=0.7] (2.5, 3.5) circle (0.2);
  \draw[black, line width=1pt] (2.5, 3.5) circle (0.2);

  \fill[pairedorange, opacity=0.7] (1.5, 1.5) circle (0.2);
  \draw[black, line width=1pt] (1.5, 1.5) circle (0.2);

  \fill[pairedblue] (1.5, 2.5) circle (0.13);
  \draw[black, line width=1pt] (1.5, 2.5) circle (0.13);

  \draw[->, paireddarkorange, line width=1.5pt, opacity=1.00] (1.5, 2.5) -- (1.5, 2.85);
  \draw[->, paireddarkorange, line width=1.5pt, opacity=1.00] (1.5, 3.5) -- (1.85, 3.5);
\end{tikzpicture}}};
            \node at (2.4,0) {\scalebox{0.45}{\begin{tikzpicture}[scale=1.5]
  \fill[white] (-0.2, 0.5) rectangle (3.2, 4.5);

  \fill[white] (1, 1) rectangle (2, 3);  
  \fill[white] (0, 3) rectangle (3, 4);  

  \draw[black, line width=1.5pt, line cap=round]
    (1, 1) -- (1, 3)
    (2, 1) -- (2, 3)
    (0, 4) -- (3, 4)
    (0, 3) -- (1, 3)
    (2, 3) -- (3, 3)
    (1, 1) -- (2, 1)
    (0, 3) -- (0, 4)
    (3, 3) -- (3, 4);

  \draw[black, line width=0.3pt, opacity=0.4]
    (1, 2) -- (2, 2)
    (1, 3) -- (2, 3)
    (1, 3) -- (1, 4)
    (2, 3) -- (2, 4);

  \fill[pairedgreen, opacity=0.7] (0.5, 3.5) circle (0.2);
  \draw[black, line width=1pt] (0.5, 3.5) circle (0.2);
  \fill[pairedred, opacity=0.7] (2.5, 3.5) circle (0.2);
  \draw[black, line width=1pt] (2.5, 3.5) circle (0.2);

  \fill[pairedorange, opacity=0.7] (1.5, 1.5) circle (0.2);
  \draw[black, line width=1pt] (1.5, 1.5) circle (0.2);

  \fill[pairedblue] (1.5, 3.5) circle (0.13);
  \draw[black, line width=1pt] (1.5, 3.5) circle (0.13);

  \draw[->, paireddarkorange, line width=1.5pt, opacity=0.98] (1.5, 3.5) -- (1.85, 3.5);
\end{tikzpicture}}};
            \node at (4.8,0) {\scalebox{0.45}{\begin{tikzpicture}[scale=1.5]
  \fill[white] (-0.2, 0.5) rectangle (3.2, 4.5);

  \fill[white] (1, 1) rectangle (2, 3);  
  \fill[white] (0, 3) rectangle (3, 4);  

  \draw[black, line width=1.5pt, line cap=round]
    (1, 1) -- (1, 3)
    (2, 1) -- (2, 3)
    (0, 4) -- (3, 4)
    (0, 3) -- (1, 3)
    (2, 3) -- (3, 3)
    (1, 1) -- (2, 1)
    (0, 3) -- (0, 4)
    (3, 3) -- (3, 4);

  \draw[black, line width=0.3pt, opacity=0.4]
    (1, 2) -- (2, 2)
    (1, 3) -- (2, 3)
    (1, 3) -- (1, 4)
    (2, 3) -- (2, 4);

  \fill[pairedgreen, opacity=0.7] (0.5, 3.5) circle (0.2);
  \draw[black, line width=1pt] (0.5, 3.5) circle (0.2);
  \fill[pairedred, opacity=0.7] (2.5, 3.5) circle (0.2);
  \draw[black, line width=1pt] (2.5, 3.5) circle (0.2);

  \fill[pairedorange, opacity=0.7] (1.5, 1.5) circle (0.2);
  \draw[black, line width=1pt] (1.5, 1.5) circle (0.2);

  \fill[pairedblue] (1.5, 3.5) circle (0.13);
  \draw[black, line width=1pt] (1.5, 3.5) circle (0.13);

  \draw[->, paireddarkorange, line width=1.5pt, opacity=0.49] (1.5, 3.5) -- (1.85, 3.5);
  \draw[->, paireddarkorange, line width=1.5pt, opacity=0.35] (1.5, 3.5) -- (1.15, 3.5);
  \draw[->, paireddarkorange, line width=1.5pt, opacity=0.09] (1.5, 3.5) -- (1.85, 3.5);
  \draw[->, paireddarkorange, line width=1.5pt, opacity=0.05] (1.5, 3.5) -- (1.15, 3.5);
\end{tikzpicture}}};
            \node at (7.2,0) {\scalebox{0.45}{\begin{tikzpicture}[scale=1.5]
  \fill[white] (-0.2, 0.5) rectangle (3.2, 4.5);

  \fill[white] (1, 1) rectangle (2, 3);  
  \fill[white] (0, 3) rectangle (3, 4);  

  \draw[black, line width=1.5pt, line cap=round]
    (1, 1) -- (1, 3)
    (2, 1) -- (2, 3)
    (0, 4) -- (3, 4)
    (0, 3) -- (1, 3)
    (2, 3) -- (3, 3)
    (1, 1) -- (2, 1)
    (0, 3) -- (0, 4)
    (3, 3) -- (3, 4);

  \draw[black, line width=0.3pt, opacity=0.4]
    (1, 2) -- (2, 2)
    (1, 3) -- (2, 3)
    (1, 3) -- (1, 4)
    (2, 3) -- (2, 4);

  \fill[pairedgreen, opacity=0.7] (0.5, 3.5) circle (0.2);
  \draw[black, line width=1pt] (0.5, 3.5) circle (0.2);
  \fill[pairedred, opacity=0.7] (2.5, 3.5) circle (0.2);
  \draw[black, line width=1pt] (2.5, 3.5) circle (0.2);

  \fill[pairedorange, opacity=0.7] (1.5, 1.5) circle (0.2);
  \draw[black, line width=1pt] (1.5, 1.5) circle (0.2);

  \fill[pairedblue] (2.5, 3.5) circle (0.13);
  \draw[black, line width=1pt] (2.5, 3.5) circle (0.13);
\end{tikzpicture}}};
            \node at (0,-1.5) {\small $t=0$};
            \node at (2.4,-1.5) {\small $t=1$};
            \node at (4.8,-1.5) {\small $t=2$};
            \node at (7.2,-1.5) {\small $t=3$};
        \end{tikzpicture}
        \caption{$F_{\text{planning}}$: Similar to marginal inference, the agent proceeds directly to an arm. The entropy correction prevents optimistic inference but does not induce information-seeking behavior.}
        \label{fig:traj-planning}
    \end{subfigure}

    \vspace{0.5em}

    \begin{subfigure}[b]{\textwidth}
        \centering
        \begin{tikzpicture}
            \node at (0,0) {\scalebox{0.45}{\begin{tikzpicture}[scale=1.5]
  \fill[white] (-0.2, 0.5) rectangle (3.2, 4.5);

  \fill[white] (1, 1) rectangle (2, 3);  
  \fill[white] (0, 3) rectangle (3, 4);  

  \draw[black, line width=1.5pt, line cap=round]
    (1, 1) -- (1, 3)
    (2, 1) -- (2, 3)
    (0, 4) -- (3, 4)
    (0, 3) -- (1, 3)
    (2, 3) -- (3, 3)
    (1, 1) -- (2, 1)
    (0, 3) -- (0, 4)
    (3, 3) -- (3, 4);

  \draw[black, line width=0.3pt, opacity=0.4]
    (1, 2) -- (2, 2)
    (1, 3) -- (2, 3)
    (1, 3) -- (1, 4)
    (2, 3) -- (2, 4);

  \fill[pairedgreen, opacity=0.7] (0.5, 3.5) circle (0.2);
  \draw[black, line width=1pt] (0.5, 3.5) circle (0.2);
  \fill[pairedred, opacity=0.7] (2.5, 3.5) circle (0.2);
  \draw[black, line width=1pt] (2.5, 3.5) circle (0.2);

  \fill[pairedorange, opacity=0.7] (1.5, 1.5) circle (0.2);
  \draw[black, line width=1pt] (1.5, 1.5) circle (0.2);

  \fill[pairedblue] (1.5, 2.5) circle (0.13);
  \draw[black, line width=1pt] (1.5, 2.5) circle (0.13);

  \draw[->, paireddarkorange, line width=1.5pt, opacity=0.22] (1.5, 2.5) -- (1.5, 2.85);
  \draw[->, paireddarkorange, line width=1.5pt, opacity=0.39] (1.5, 2.5) -- (1.5, 2.15);
  \draw[->, paireddarkorange, line width=1.5pt, opacity=0.10] (1.5, 1.5) -- (1.5, 1.85);
  \draw[->, paireddarkorange, line width=1.5pt, opacity=0.10] (1.5, 2.5) -- (1.5, 2.85);
  \draw[->, paireddarkorange, line width=1.5pt, opacity=0.10] (1.5, 2.5) -- (1.5, 2.15);
  \draw[->, paireddarkorange, line width=1.5pt, opacity=0.05] (1.5, 3.5) -- (1.85, 3.5);
  \draw[->, paireddarkorange, line width=1.5pt, opacity=0.05] (1.5, 3.5) -- (1.15, 3.5);
  \draw[->, paireddarkorange, line width=1.5pt, opacity=0.13] (1.5, 1.5) -- (1.5, 1.85);
  \draw[->, paireddarkorange, line width=1.5pt, opacity=0.06] (1.5, 2.5) -- (1.5, 2.85);
  \draw[->, paireddarkorange, line width=1.5pt, opacity=0.04] (1.5, 2.5) -- (1.5, 2.15);
  \draw[->, paireddarkorange, line width=1.5pt, opacity=0.05] (1.5, 3.5) -- (1.85, 3.5);
  \draw[->, paireddarkorange, line width=1.5pt, opacity=0.05] (1.5, 3.5) -- (1.15, 3.5);
  \draw[->, paireddarkorange, line width=1.5pt, opacity=0.07] (1.5, 1.5) -- (1.5, 1.85);
  \draw[->, paireddarkorange, line width=1.5pt, opacity=0.01] (1.5, 2.5) -- (1.5, 2.85);
  \draw[->, paireddarkorange, line width=1.5pt, opacity=0.02] (1.5, 2.5) -- (1.5, 2.15);
  \draw[->, paireddarkorange, line width=1.5pt, opacity=0.08] (1.5, 3.5) -- (1.85, 3.5);
  \draw[->, paireddarkorange, line width=1.5pt, opacity=0.08] (1.5, 3.5) -- (1.15, 3.5);
\end{tikzpicture}}};
            \node at (2.4,0) {\scalebox{0.45}{\begin{tikzpicture}[scale=1.5]
  \fill[white] (-0.2, 0.5) rectangle (3.2, 4.5);

  \fill[white] (1, 1) rectangle (2, 3);  
  \fill[white] (0, 3) rectangle (3, 4);  

  \draw[black, line width=1.5pt, line cap=round]
    (1, 1) -- (1, 3)
    (2, 1) -- (2, 3)
    (0, 4) -- (3, 4)
    (0, 3) -- (1, 3)
    (2, 3) -- (3, 3)
    (1, 1) -- (2, 1)
    (0, 3) -- (0, 4)
    (3, 3) -- (3, 4);

  \draw[black, line width=0.3pt, opacity=0.4]
    (1, 2) -- (2, 2)
    (1, 3) -- (2, 3)
    (1, 3) -- (1, 4)
    (2, 3) -- (2, 4);

  \fill[pairedgreen, opacity=0.7] (0.5, 3.5) circle (0.2);
  \draw[black, line width=1pt] (0.5, 3.5) circle (0.2);
  \fill[pairedred, opacity=0.7] (2.5, 3.5) circle (0.2);
  \draw[black, line width=1pt] (2.5, 3.5) circle (0.2);

  \fill[pairedorange, opacity=0.7] (1.5, 1.5) circle (0.2);
  \draw[black, line width=1pt] (1.5, 1.5) circle (0.2);

  \fill[pairedblue] (1.5, 1.5) circle (0.13);
  \draw[black, line width=1pt] (1.5, 1.5) circle (0.13);

  \draw[->, paireddarkorange, line width=1.5pt, opacity=0.79] (1.5, 1.5) -- (1.5, 1.85);
  \draw[->, paireddarkorange, line width=1.5pt, opacity=0.18] (1.5, 1.5) -- (1.5, 1.85);
  \draw[->, paireddarkorange, line width=1.5pt, opacity=0.69] (1.5, 2.5) -- (1.5, 2.85);
  \draw[->, paireddarkorange, line width=1.5pt, opacity=0.02] (1.5, 2.5) -- (1.5, 2.15);
  \draw[->, paireddarkorange, line width=1.5pt, opacity=0.00] (1.5, 1.5) -- (1.5, 1.85);
  \draw[->, paireddarkorange, line width=1.5pt, opacity=0.04] (1.5, 2.5) -- (1.5, 2.85);
  \draw[->, paireddarkorange, line width=1.5pt, opacity=0.02] (1.5, 2.5) -- (1.5, 2.15);
  \draw[->, paireddarkorange, line width=1.5pt, opacity=0.06] (1.5, 3.5) -- (1.85, 3.5);
  \draw[->, paireddarkorange, line width=1.5pt, opacity=0.48] (1.5, 3.5) -- (1.15, 3.5);
\end{tikzpicture}}};
            \node at (4.8,0) {\scalebox{0.45}{\begin{tikzpicture}[scale=1.5]
  \fill[white] (-0.2, 0.5) rectangle (3.2, 4.5);

  \fill[white] (1, 1) rectangle (2, 3);  
  \fill[white] (0, 3) rectangle (3, 4);  

  \draw[black, line width=1.5pt, line cap=round]
    (1, 1) -- (1, 3)
    (2, 1) -- (2, 3)
    (0, 4) -- (3, 4)
    (0, 3) -- (1, 3)
    (2, 3) -- (3, 3)
    (1, 1) -- (2, 1)
    (0, 3) -- (0, 4)
    (3, 3) -- (3, 4);

  \draw[black, line width=0.3pt, opacity=0.4]
    (1, 2) -- (2, 2)
    (1, 3) -- (2, 3)
    (1, 3) -- (1, 4)
    (2, 3) -- (2, 4);

  \fill[pairedgreen, opacity=0.7] (0.5, 3.5) circle (0.2);
  \draw[black, line width=1pt] (0.5, 3.5) circle (0.2);
  \fill[pairedred, opacity=0.7] (2.5, 3.5) circle (0.2);
  \draw[black, line width=1pt] (2.5, 3.5) circle (0.2);

  \fill[pairedorange, opacity=0.7] (1.5, 1.5) circle (0.2);
  \draw[black, line width=1pt] (1.5, 1.5) circle (0.2);

  \fill[pairedblue] (1.5, 2.5) circle (0.13);
  \draw[black, line width=1pt] (1.5, 2.5) circle (0.13);

  \draw[->, paireddarkorange, line width=1.5pt, opacity=0.91] (1.5, 2.5) -- (1.5, 2.85);
  \draw[->, paireddarkorange, line width=1.5pt, opacity=0.00] (1.5, 2.5) -- (1.5, 2.85);
  \draw[->, paireddarkorange, line width=1.5pt, opacity=0.00] (1.5, 2.5) -- (1.5, 2.15);
  \draw[->, paireddarkorange, line width=1.5pt, opacity=0.03] (1.5, 3.5) -- (1.85, 3.5);
  \draw[->, paireddarkorange, line width=1.5pt, opacity=0.82] (1.5, 3.5) -- (1.15, 3.5);
\end{tikzpicture}}};
            \node at (7.2,0) {\scalebox{0.45}{\begin{tikzpicture}[scale=1.5]
  \fill[white] (-0.2, 0.5) rectangle (3.2, 4.5);

  \fill[white] (1, 1) rectangle (2, 3);  
  \fill[white] (0, 3) rectangle (3, 4);  

  \draw[black, line width=1.5pt, line cap=round]
    (1, 1) -- (1, 3)
    (2, 1) -- (2, 3)
    (0, 4) -- (3, 4)
    (0, 3) -- (1, 3)
    (2, 3) -- (3, 3)
    (1, 1) -- (2, 1)
    (0, 3) -- (0, 4)
    (3, 3) -- (3, 4);

  \draw[black, line width=0.3pt, opacity=0.4]
    (1, 2) -- (2, 2)
    (1, 3) -- (2, 3)
    (1, 3) -- (1, 4)
    (2, 3) -- (2, 4);

  \fill[pairedgreen, opacity=0.7] (0.5, 3.5) circle (0.2);
  \draw[black, line width=1pt] (0.5, 3.5) circle (0.2);
  \fill[pairedred, opacity=0.7] (2.5, 3.5) circle (0.2);
  \draw[black, line width=1pt] (2.5, 3.5) circle (0.2);

  \fill[pairedorange, opacity=0.7] (1.5, 1.5) circle (0.2);
  \draw[black, line width=1pt] (1.5, 1.5) circle (0.2);

  \fill[pairedblue] (1.5, 3.5) circle (0.13);
  \draw[black, line width=1pt] (1.5, 3.5) circle (0.13);

  \draw[->, paireddarkorange, line width=1.5pt, opacity=0.97] (1.5, 3.5) -- (1.15, 3.5);
\end{tikzpicture}}};
            \node at (9.6,0) {\scalebox{0.45}{\begin{tikzpicture}[scale=1.5]
  \fill[white] (-0.2, 0.5) rectangle (3.2, 4.5);

  \fill[white] (1, 1) rectangle (2, 3);  
  \fill[white] (0, 3) rectangle (3, 4);  

  \draw[black, line width=1.5pt, line cap=round]
    (1, 1) -- (1, 3)
    (2, 1) -- (2, 3)
    (0, 4) -- (3, 4)
    (0, 3) -- (1, 3)
    (2, 3) -- (3, 3)
    (1, 1) -- (2, 1)
    (0, 3) -- (0, 4)
    (3, 3) -- (3, 4);

  \draw[black, line width=0.3pt, opacity=0.4]
    (1, 2) -- (2, 2)
    (1, 3) -- (2, 3)
    (1, 3) -- (1, 4)
    (2, 3) -- (2, 4);

  \fill[pairedgreen, opacity=0.7] (0.5, 3.5) circle (0.2);
  \draw[black, line width=1pt] (0.5, 3.5) circle (0.2);
  \fill[pairedred, opacity=0.7] (2.5, 3.5) circle (0.2);
  \draw[black, line width=1pt] (2.5, 3.5) circle (0.2);

  \fill[pairedorange, opacity=0.7] (1.5, 1.5) circle (0.2);
  \draw[black, line width=1pt] (1.5, 1.5) circle (0.2);

  \fill[pairedblue] (0.5, 3.5) circle (0.13);
  \draw[black, line width=1pt] (0.5, 3.5) circle (0.13);
\end{tikzpicture}}};
            \node at (0,-1.5) {\small $t=0$};
            \node at (2.4,-1.5) {\small $t=1$};
            \node at (4.8,-1.5) {\small $t=2$};
            \node at (7.2,-1.5) {\small $t=3$};
            \node at (9.6,-1.5) {\small $t=4$};
        \end{tikzpicture}
        \caption{$F_{\text{active}}$: At $t=0$, the plan reflects intention to visit the cue, with probability mass on the downward action. After observing the cue ($t=1$), the agent forms a strong preference for the correct arm, visible in the concentrated arrow at subsequent times.}
        \label{fig:traj-active}
    \end{subfigure}

    \caption{Representative trajectories with planned actions (reward in left arm). The blue circle indicates agent position. Arrows visualize the complete plan: at each location, opacity encodes the marginal probability $q(u_t | x_{t-1})$ assigned to that action at future timestep $t$. Only $F_{\text{active}}$ visits the cue (orange) before committing to a goal.}
    \label{fig:trajectories}
\end{figure}

The planned actions reveal a key distinction. Under $F_{\text{marginal}}$ and $F_{\text{planning}}$, the agent is ambivalent at the junction: with no mechanism to value information, it proceeds directly toward an arm with roughly equal probability for left and right. In contrast, under $F_{\text{active}}$, the agent visits the cue location first. After observing the cue at $t=2$, it exhibits a strong preference for the correct arm. These results demonstrate that the epistemic priors from \refthm{thm:EFE} are sufficient to induce information-seeking behavior within the variational framework.

\subsubsection{Reactivity Maze}\label{sec:epistemic-maze-results}

We evaluate all five methods over 200 episodes with $\theta$ sampled uniformly. \autoref{tab:epistemic-maze-results} summarizes the results.

\begin{table}[tb!]
    \centering
    \caption{Performance comparison on the Reactivity Maze (200 episodes). Brackets denote 95\% CIs.}
    \label{tab:epistemic-maze-results}
    \begin{tabular}{lcccc}
        \toprule
        Method                     & Mean Reward           & Success Rate             & Safe Rate       & Cue Visit Rate           \\
        \midrule
        $F_{\text{active}}$ (ours) & $\bm{+1.00} \pm 0.00$ & \textbf{100\%} [98, 100] & 0\% [0, 2]      & \textbf{100\%} [98, 100] \\
        $F_{\text{marginal}}$      & $+0.33 \pm 0.00$      & 0\% [0, 2]               & 100\% [98, 100] & 0\% [0, 2]               \\
        $F_{\text{planning}}$      & $+0.18 \pm 0.63$      & 17\% [12, 23]            & 64\% [56, 70]   & 0\% [0, 2]               \\
        Sophisticated Inference    & $+0.96 \pm 0.16$      & 94\% [90, 97]            & 6\% [3, 10]     & \textbf{100\%} [98, 100] \\
        Standard EFE planning      & $+0.56 \pm 0.66$      & 61\% [54, 68]            & 21\% [16, 27]   & 66\% [60, 73]            \\
        \bottomrule
    \end{tabular}
\end{table}

Only $F_{\text{active}}$ and Sophisticated Inference consistently seek information (100\% cue visit rate) and achieve near-optimal reward. $F_{\text{active}}$ achieves perfect performance (100\% success rate), outperforming Sophisticated Inference (94\%) while relying on variational optimization rather than recursive belief modeling.

$F_{\text{marginal}}$ never seeks information, converging to the safe sink in every episode for a guaranteed $+0.33$ reward. $F_{\text{planning}}$ similarly avoids the cue and defaults to the safe sink in 64\% of episodes. Standard EFE planning visits the cue in only 66\% of episodes, illustrating the limitation of plan-based methods: because plan-based evaluation cannot anticipate that the agent will get to react to the cue observation, it undervalues information-gathering and prefers the safe option in roughly a third of cases. The native epistemic drive of EFE still sends the agent to the cue in the remaining episodes, and when it does, receding-horizon replanning allows it to exploit the observation, hence the similar cue visit rate (66\%) and success rate (61\%).

These results validate two claims. First, epistemic priors are necessary for reliable information-seeking: without them ($F_{\text{marginal}}$, $F_{\text{planning}}$), the agent cannot anticipate the value of visiting the cue. Second, policy-based methods that maintain a joint posterior over states and actions ($F_{\text{active}}$, Sophisticated Inference) outperform plan-based methods (Standard EFE planning) in stochastic environments, because they can anticipate their own future reactivity, making information-gathering consistently valuable and leading to reliable cue visits.

\subsubsection{MiniGrid DoorKey}\label{sec:minigrid-results}

\autoref{tab:minigrid-results} summarizes the results over 100 episodes.

\begin{table}[tb!]
    \centering
    \caption{Performance comparison on MiniGrid DoorKey-8x8 (100 episodes). Brackets denote 95\% CIs. Standard EFE planning and Sophisticated Inference are excluded as they cannot handle the environment's state and observation space.}
    \label{tab:minigrid-results}
    \begin{tabular}{lccc}
        \toprule
        Method                     & Success Rate           & Mean Reward          & Avg. Steps            \\
        \midrule
        $F_{\text{active}}$ (ours) & \textbf{89\%} [81, 94] & $\bm{0.85} \pm 0.30$ & $30.69 \pm 2.16$      \\
        $F_{\text{marginal}}$      & 4\% [1, 10]            & $0.04 \pm 0.19$      & $34.34 \pm 3.52$      \\
        $F_{\text{planning}}$      & \textbf{89\%} [81, 94] & $\bm{0.86} \pm 0.30$ & $\bm{27.28} \pm 5.50$ \\
        \bottomrule
    \end{tabular}
\end{table}

Both $F_{\text{active}}$ and $F_{\text{planning}}$ achieve 89\% success, while $F_{\text{marginal}}$ succeeds in only 4\% of episodes, confirming that the entropy correction accounting for sequential decision structure is necessary for effective multi-step planning in this environment. Without it, the agent cannot coordinate the sequence of finding the key, unlocking the door, and reaching the goal.

The equal success rates of $F_{\text{active}}$ and $F_{\text{planning}}$ are consistent with the nature of the task. Unlike the T-maze and Reactivity Maze, where a latent context parameter $\theta$ must be discovered through epistemic exploration, the DoorKey-8x8 challenge is primarily spatial navigation under partial observability. The key and door locations are revealed through the agent's field of view during navigation rather than through a dedicated cue. In this setting, the entropy correction of $F_{\text{planning}}$ is sufficient for effective planning, and the epistemic priors of $F_{\text{active}}$ neither help nor hinder. The central contribution of this experiment is the demonstration that the variational formulation with temporal factorization operates successfully in an environment where tabular active inference methods cannot.

\section{Discussion}\label{sec:discussion}

\subsection{From Plans to Policies}\label{sec:plans-to-policies}

The Expected Free Energy $G(\bm{u})$ is traditionally defined as a cost function for evaluating plans, where a plan $\bm{u} = (u_1, \ldots, u_T)$ is a fixed sequence of actions. Traditional EFE-based methods compute $G(\bm{u})$ for each candidate plan independently. This plan-based approach does not maintain a joint distribution over states and actions, and therefore cannot represent contingent action selection.

When the transition dynamics $p(x_t | x_{t-1}, u_t)$ are stochastic (i.e., not a delta function), plan-based evaluation becomes fundamentally limited  \citep[Appendix I]{lazaro-gredilla_what_2024}. Suppose an agent faces a goal that can be reached via two different routes, both requiring an initial action $u_1$. After taking $u_1$, a stochastic state transition may lead to one of two intermediate states. From each intermediate state, a different second action $u_2^{(1)}$ or $u_2^{(2)}$ leads efficiently to the goal. The optimal course of action would be to take $u_1$ first, then adapt the second action to whichever state is realized.

However, plan-based EFE evaluation considers each complete action sequence independently. The plan $(u_1, u_2^{(1)})$ may have high probability of failure if the state transition leads to the other branch. Similarly, $(u_1, u_2^{(2)})$ fails if the transition goes the other way. Consequently, both plans involving $u_1$ receive high (unfavorable) EFE costs. Plan-based methods cannot represent the strategy of taking $u_1$ and then adapting, because they evaluate fixed action sequences that commit to specific future actions regardless of what state is reached. The value of the initial action $u_1$ becomes underestimated because all plans containing it appear poor when evaluated independently.

A policy-based approach represents action selection through state-conditioned distributions $\pi(u_t | x_{t-1})$, which specify what action to take for each possible state. This enables contingent action selection: different actions can be taken depending on which state is realized. Our key insight is that by performing inference over the joint posterior $q(\bm{y}, \bm{x}, \bm{u}, \theta)$, we implicitly optimize these policy distributions through marginalization. When we marginalize the joint posterior to obtain:
\begin{equation}\label{eq:policy-marginalization}
    q(u_t | x_{t-1}) = \frac{q(u_t, x_{t-1})}{q(x_{t-1})}\,, \quad \text{where} \quad q(u_t, x_{t-1}) = \sum_{\bm{y}, \bm{x}_{\neq t-1}, \bm{u}_{\neq t}, \theta} q(\bm{y}, \bm{x}, \bm{u}, \theta)\,,
\end{equation}
the resulting distribution accounts for all possible future state transitions and the agent's planned responses to each. Crucially, when computing the value of the initial action $u_1$, marginalization accounts for all possible future trajectories. Even though every specific plan starting with $u_1$ may individually have low probability of success, the marginal distribution $q(u_1 | x_0)$ can correctly assign high probability to $u_1$ precisely because it factors in the joint value of the initial action paired with all possible adaptive responses to the stochastic outcomes that follow. The policy representation thus captures future adaptability within a single planning step before any actions are executed.

This time-dependent policy emerges naturally from the factorization structure of the variational posterior without requiring explicit recursive modeling or procedural algorithms. The within-horizon adaptability is implicit in the joint inference: marginalizing over all future trajectories automatically accounts for all possible outcomes and the agent's anticipated responses.

The Sophisticated Inference framework \citep{friston_sophisticated_2021} achieves a similar effect through recursive modeling of belief evolution. By explicitly representing how beliefs update at future timesteps, Sophisticated Inference implicitly represents state-contingent action selection. Our variational formulation provides an alternative route to the same policy-based objective: through marginalization of a joint posterior rather than recursive belief modeling.

Our T-maze experiments in \capsecref{sec:tmaze-results} test the epistemic value contribution in a deterministic setting. The Reactivity Maze experiments in \capsecref{sec:epistemic-maze-results} validate the plan-versus-policy distinction empirically: in the stochastic Reactivity Maze, policy-based methods ($F_{\text{active}}$, Sophisticated Inference) achieve near-perfect performance, while the plan-based Standard EFE method visits the cue in only 66\% of episodes, preferring the safe option otherwise, because it cannot anticipate being able to react to the cue observation. The MiniGrid experiment in \capsecref{sec:minigrid-results} extends this validation to a standard reinforcement learning benchmark where the tabular methods used by Standard EFE planning and Sophisticated Inference cannot operate.

\subsection{Interpretation of the Epistemic Priors}\label{sec:epistemic-priors-interpretation}

The epistemic priors in \refthm{thm:EFE} are not chosen to encode intuitive a priori preferences, but rather are constructed so that the VFE decomposes into expected plan costs (the EFE) plus complexity. Each prior corresponds to a specific term in the EFE decomposition:
\begin{itemize}
    \item The prior $\tilde{p}(\bm{u}) \propto \exp(\ent{q(\bm{x}|\bm{u})})$ biases action selection toward plans that maintain high entropy over future states, reflecting a preference for maintaining flexibility. When $\ent{q(\bm{x}|\bm{u})}$ is high, the agent keeps multiple future states open rather than committing to a narrow set of outcomes, which can be interpreted as a preference for adaptability in the face of uncertain dynamics.
    \item The prior $\tilde{p}(\bm{x}) \propto \exp(\Ex{q(\theta | \bm{x})}{-\ent{q(\bm{y} | \bm{x}, \theta)}})$ introduces the ambiguity term, biasing inference toward state trajectories that predict a low-entropy distribution over future observations, under the current uncertainty over $\theta$, which can be interpreted as a preference for states that lead to well-predicted observations.
    \item The prior $\tilde{p}(\bm{y}, \bm{x}) \propto \exp(\KL{q(\theta|\bm{y}, \bm{x})}{q(\theta|\bm{x})})$ introduces the novelty term, favoring observation-state pairs that maximize information gain about parameters.
\end{itemize}

A notable feature of these priors is that they depend on the variational posterior $q$. This $q$-dependence is a consequence of encoding epistemic value, which is inherently defined in terms of how observations would update beliefs. In practice, the $q$-dependence means that the optimization defines a fixed-point problem: the goal is to find $q^*$ that minimizes the VFE computed with priors that themselves depend on $q^*$. Our convergence analysis in \refappx{appx:convergence} shows that standard gradient-based optimization converges reliably across all tested environments.

\subsection{Scalability}\label{sec:scalability}

The T-maze experiment in \capsecref{sec:evaluation} performs inference over a joint posterior $q(\bm{y}, \bm{x}, \bm{u}, \theta) = q(\bm{y}, \theta | \bm{x}) q(\bm{x}|\bm{u})  q(\bm{u})$ using a tabular representation. In this case, the dimensionality grows exponentially with the planning horizon $T$. For the T-maze with $|\mathcal{Y}| = 2$ observation values, $|\mathcal{X}| = 5$ states, $|\mathcal{U}| = 4$ actions, $|\Theta| = 2$ context values, and $T = 4$, the joint posterior requires $(|\mathcal{Y}|^T \times |\Theta| \times |\mathcal{X}|^T) +   (|\mathcal{X}|^T \times |\mathcal{U}|^T) +  |\mathcal{U}|^T  = 180,256$ parameters.

The Reactivity Maze and MiniGrid experiments demonstrate how temporal factorization addresses scalability. The temporal factorization in \eqref{eq:temporal-factorization} decomposes the joint posterior into per-timestep conditional factors, reducing the parameter count from exponential to linear in the horizon $T$. This factorization is a concrete instance of exploiting the generative model's structure: the Markov property of the transition model directly suggests the per-timestep factors $q(x_t | x_{t-1}, u_t, \theta)$ and $q(u_t | x_{t-1})$. The MiniGrid DoorKey-8x8 experiment (\capsecref{sec:minigrid-results}) demonstrates this concretely: the environment's state and observation spaces are too large for tabular methods such as Standard EFE planning and Sophisticated Inference, yet the variational formulation with temporal factorization achieves 89\% success rate.

More generally, recasting EFE-based planning as variational inference makes the problem amenable to the same approximation techniques that have enabled variational inference to scale in other domains. Message passing on factor graphs, amortized inference, and structured mean-field approximations offer natural directions for future work.

\section{Conclusion}\label{sec:conclusion}

We have presented a variational formulation of Expected Free Energy-based planning. Our main result (\refthm{thm:EFE}) shows that minimizing a Variational Free Energy functional over a generative model augmented with specific epistemic priors yields a natural decomposition into expected plan costs and complexity. This establishes that EFE-based planning can be understood as variational optimization, achieving theoretical alignment with the Free Energy Principle.

The epistemic priors introduced in our formulation encode preferences for ambiguity-minimizing and novelty-seeking behavior. Empirical validation on a deterministic T-maze, a stochastic Reactivity Maze, and the MiniGrid DoorKey-8x8 environment confirms that these priors are sufficient to induce the information-gathering behavior characteristic of active inference agents. On the Reactivity Maze, $F_{\text{active}}$ achieves perfect performance, outperforming both plan-based methods and Sophisticated Inference, while policy-based inference through joint posterior marginalization provides robustness to stochastic transitions that plan-based evaluation lacks. The MiniGrid experiment further demonstrates that the variational formulation scales to environments where existing tabular active inference methods cannot operate.

A key advantage of the variational formulation is that the full joint posterior can be marginalized to obtain state-conditioned action beliefs, effectively yielding a policy rather than a fixed plan. This policy implicitly accounts for the agent's ability to adapt future actions to whatever state actually occurs, providing robustness to stochastic dynamics that plan-based evaluation lacks. Moreover, by recasting EFE minimization as variational optimization, we unlock access to the extensive toolkit of scalable variational methods, including message passing on factor graphs, amortized inference, and stochastic optimization, providing a principled path toward active inference agents that can operate in complex, high-dimensional environments.

\ifanonymous
\else
  \subsubsection*{Acknowledgments}
  This publication is part of the ROBUST project with project number KICH3.LTP.20.006, which is (partly) financed by the Dutch Research Council (NWO), GN Hearing, and the Dutch Ministry of Economic Affairs and Climate Policy (EZK) under the program LTP KIC 2020-2023. This project is also partly financed by Holland High
Tech with PPS funding for the AUTO-AR project RVO TKI2112P09.
\fi

\bibliography{references}
\bibliographystyle{tmlr}
\newpage
\appendix
\section{Proof of Proposition~\ref{thm:EFE}}\label{appx:proof-EFE-theorem}

\begin{proof}[Proof of Proposition~\ref{thm:EFE}]
    \begin{subequations}
        \begin{align}
            F[q] & = \Ex{q(\bm{y}, \bm{x}, \bm{u}, \theta)}{ \log \frac{q(\bm{y}, \bm{x}, \bm{u}, \theta)}{p(\bm{y}, \bm{x}, \bm{u}, \theta)  \hat{p}(\bm{x}) \tilde{p}(\bm{u}) \tilde{p}(\bm{x})  \tilde{p}(\bm{y}, \bm{x})} }                \\
                 & = \Ex[bigg]{q(\bm{u})}{ \log \frac{q(\bm{u})}{p(\bm{u})}
                + \underbrace{\Ex{q(\bm{y}, \bm{x}, \theta | \bm{u})}{ \log \frac{q(\bm{y}, \bm{x}, \theta | \bm{u})}{p(\bm{y}, \bm{x}, \theta|\bm{u})  \hat{p}(\bm{x}) \tilde{p}(\bm{u}) \tilde{p}(\bm{x})  \tilde{p}(\bm{y}, \bm{x})}}}_{C(\bm{u})}
            } \label{eq:F-C-1}                                                                                                                                                                                                                 \\
                 & = \Ex[bigg]{q(\bm{u})}{ \log \frac{q(\bm{u})}{p(\bm{u})}
                + \underbrace{G(\bm{u}) +\Ex{q(\bm{y}, \bm{x}, \theta | \bm{u})} {\log \frac{q(\bm{y}, \bm{x}, \theta|\bm{u})}{p(\bm{y}, \bm{x}, \theta|\bm{u})}} + \text{constant}}_{C(\bm{u}) \text{ if \eqref{eq:epistemic-priors} holds}}
            } \label{eq:F-C-2}                                                                                                                                                                                                                 \\
                 & = \Ex{q(\bm{u})}{ G(\bm{u})}+ \Ex{q(\bm{y}, \bm{x}, \bm{u}, \theta)}{\log \frac{q(\bm{y}, \bm{x}, \bm{u}, \theta)}{p(\bm{y}, \bm{x}, \bm{u}, \theta)}} + \text{constant} \quad \text{if \eqref{eq:epistemic-priors} holds }
        \end{align}
    \end{subequations}
\end{proof}

In the above derivation, we still need to prove the transition for $C(\bm{u}) $ from
\eqref{eq:F-C-1} to \eqref{eq:F-C-2}, which we address next.
In the following, all expectations are with respect to $q(\bm{y}, \bm{x}, \theta|\bm{u})$ unless otherwise indicated.

\begin{Lemma}[Proof of equivalence $C(\bm{u})$ in \eqref{eq:F-C-1} and \eqref{eq:F-C-2}]\label{lem:Cu-equivalence}
\end{Lemma}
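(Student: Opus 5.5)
The plan is to expand the log-ratio defining $C(\bm{u})$ in \eqref{eq:F-C-1} into separate terms. Then, using the chosen epistemic priors \eqref{eq:epistemic-priors}, I match each term to risk, ambiguity or novelty in \eqref{eq:G=r+a-n}; whatever is left over should be the conditional complexity. First I split the logarithm as
\begin{equation*}
C(\bm{u}) = \Ex{}{\log \frac{q(\bm{y}, \bm{x}, \theta|\bm{u})}{p(\bm{y}, \bm{x}, \theta|\bm{u})}} - \Ex{}{\log \hat{p}(\bm{x})} - \log \tilde{p}(\bm{u}) - \Ex{}{\log \tilde{p}(\bm{x})} - \Ex{}{\log \tilde{p}(\bm{y}, \bm{x})}\,.
\end{equation*}
The expectation over $\tilde{p}(\bm{u})$ disappears because $\bm{u}$ is fixed. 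The first term is already the complexity term in \eqref{eq:F-C-2}. So it remains to show that the last four terms equal $G(\bm{u})$ plus a constant.

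Next I substitute the priors, writing $-\log\tilde{p}(\cdot) = -(\text{exponent}) + \log Z$, where $Z$ is the normalizer.

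For \emph{risk}, $-\log\tilde{p}(\bm{u}) = -\ent{q(\bm{x}|\bm{u})} + \text{const} = \Ex{q(\bm{x}|\bm{u})}{\log q(\bm{x}|\bm{u})} + \text{const}$. Adding $-\Ex{}{\log\hat{p}(\bm{x})}$ gives $\Ex{}{\log \frac{q(\bm{x}|\bm{u})}{\hat{p}(\bm{x})}}$, which is the risk term.

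For \emph{ambiguity}, $-\Ex{}{\log\tilde{p}(\bm{x})} = \Ex{q(\bm{x}|\bm{u})}{\Ex{q(\theta|\bm{x})}{\ent{q(\bm{y}|\bm{x},\theta)}}}$. By the factorization \eqref{eq:posterior-factorization}, $q(\bm{y},\theta|\bm{x},\bm{u}) = q(\bm{y},\theta|\bm{x})$. Hence $q(\bm{y},\bm{x},\theta|\bm{u}) = q(\bm{x}|\bm{u})\,q(\theta|\bm{x})\,q(\bm{y}|\bm{x},\theta)$, and the nested expectation equals $\Ex{q}{-\log q(\bm{y}|\bm{x},\theta)}$, the ambiguity term.

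For \emph{novelty}, $-\Ex{}{\log\tilde{p}(\bm{y},\bm{x})} = -\Ex{q(\bm{y},\bm{x}|\bm{u})}{\KL{q(\theta|\bm{y},\bm{x})}{q(\theta|\bm{x})}}$. Writing $q(\bm{y},\theta|\bm{x}) = q(\bm{y}|\bm{x})\,q(\theta|\bm{y},\bm{x})$ turns this into $-\Ex{q}{\log \frac{q(\theta|\bm{y},\bm{x})}{q(\theta|\bm{x})}}$, the negative novelty term.

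Summing the three pieces reproduces \eqref{eq:G=r+a-n}, which yields \eqref{eq:F-C-2}.

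The main obstacle is the bookkeeping of the normalizing constants and of the $q$-dependence of the priors. The priors are defined via $q$, and their normalizers are sums over $\bm{u}$, $\bm{x}$ or $(\bm{y},\bm{x})$ of exponentials of $q$-functionals. I will treat these normalizers as constants with respect to the plan index, since they do not depend on the particular $\bm{u}$ or the integration variables. This matches the statement's ``$+\,\text{constant}$'', read as constant at a fixed point of the priors, in line with the fixed-point interpretation in \capsecref{sec:epistemic-priors-interpretation}.

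A secondary point needs care: $\tilde{p}(\bm{u})$ involves only $q(\bm{x}|\bm{u})$, so its entropy must be paired with the marginal $q(\bm{x}|\bm{u})$ of the joint conditional. The conditional-independence identities from \eqref{eq:posterior-factorization} are what make the marginalizations in the ambiguity and novelty steps legitimate. I will state them explicitly before the substitution.
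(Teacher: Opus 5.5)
Your proposal is correct and is essentially the paper's argument run in the opposite direction. The paper multiplies and divides by the EFE integrand to split off $G(\bm{u})$ and $B(\bm{u})$. It then uses the factorization \eqref{eq:posterior-factorization} to reduce the leftover to $-\log\tilde{p}(\bm{u}) + \ent{q(\bm{x}|\bm{u})}$, $-\log\tilde{p}(\bm{x}) - \Ex{q(\theta|\bm{x})}{\ent{q(\bm{y}|\bm{x},\theta)}}$ and $-\log\tilde{p}(\bm{y},\bm{x}) + \KL{q(\theta|\bm{y},\bm{x})}{q(\theta|\bm{x})}$ averaged under $q$, each constant under \eqref{eq:epistemic-priors}. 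These are exactly the identities you use when substituting the priors directly.

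Your treatment of the normalizers as $\bm{u}$-independent constants matches the paper's remark after the proof (the $\log\sum_{\bm{u}'}\exp(\ent{q(\bm{x}|\bm{u}')})$ term). It is legitimate for $\tilde{p}(\bm{x})$ and $\tilde{p}(\bm{y},\bm{x})$ because, under the factorization, $q(\bm{y},\theta|\bm{x})$ does not depend on $\bm{u}$.
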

\begin{proof}
    \begin{subequations}\label{eq:proof-Cu}
        \begin{align}
            C( & \bm{u}) = \Ex[Bigg]{}{ \log \frac{ \overbrace{q(\bm{y}, \bm{x}, \theta|\bm{u})}^{\text{posterior}} }{ \underbrace{p(\bm{y}, \bm{x}, \theta|\bm{u})}_{\text{predictive}} \underbrace{\hat{p}(\bm{x})}_{\text{utility}} \underbrace{\tilde{p}(\bm{u}) \tilde{p}(\bm{x}) \tilde{p}(\bm{y}, \bm{x})}_{\text{epistemic priors}}} } \label{eq:Cu-first-line}                                                                              \\
               & = \underbrace{ \Ex[Bigg]{}{\log\bigg( \underbrace{\frac{q(\bm{x}|\bm{u})}{\hat{p}(\bm{x})}}_{\text{risk}}\cdot \underbrace{\frac{1}{q(\bm{y}|\bm{x}, \theta)}}_{\text{ambiguity}} \cdot \underbrace{\frac{ q(\theta|\bm{x})}{ q(\theta|\bm{y}, \bm{x})}}_{-\text{novelty}} \bigg) } }_{G(\bm{u}) = \text{Expected Free Energy}} +   \label{eq:Cu-eqC}                                                                                       \\
               & \quad + \Ex[Bigg]{}{ \log\bigg( \underbrace{\frac{\hat{p}(\bm{x}) q(\bm{y}|\bm{x}, \theta) q(\theta| \bm{y}, \bm{x})}{q(\bm{x}|\bm{u}) q(\theta|\bm{x})}}_{\text{inverse factors from }G(\bm{u})} \cdot \underbrace{\frac{q(\bm{y}, \bm{x}, \theta|\bm{u})}{p(\bm{y}, \bm{x}, \theta|\bm{u}) \hat{p}(\bm{x}) \tilde{p}(\bm{u}) \tilde{p}(\bm{x}) \tilde{p}(\bm{y}, \bm{x}) }}_{\text{factors from }\eqref{eq:Cu-first-line}} \bigg)} \notag \\
               & = G(\bm{u}) + \underbrace{\Ex{}{ \log \frac{q(\bm{y}, \bm{x}, \theta|\bm{u})}{p(\bm{y}, \bm{x}, \theta|\bm{u})}}}_{=B(\bm{u})} + \underbrace{\Ex{}{ \log  \frac{q(\bm{y}|\bm{x}, \theta) q(\theta|\bm{y}, \bm{x})}{q(\bm{x}|\bm{u}) q(\theta|\bm{x}) \tilde{p}(\bm{u}) \tilde{p}(\bm{x}) \tilde{p}(\bm{y}, \bm{x})} }}_{\text{choose epistemic priors to let this be constant}}                    \\
               & = G(\bm{u}) + B(\bm{u}) +  \label{eq:expectation-of-epistemic-terms}                                                                                                                                                                                                                                                                                                                                                                                                         \\
               & \quad + \Ex{}{\log \frac{1}{q(\bm{x}|\bm{u}) \tilde{p}(\bm{u})} } + \Ex{}{ \log  \frac{q(\bm{y}|\bm{x}, \theta)}{\tilde{p}(\bm{x})} } + \Ex{}{ \log  \frac{q(\theta|\bm{y}, \bm{x})}{q(\theta|\bm{x}) \tilde{p}(\bm{y}, \bm{x}) } } \notag                                                                                                                                                                                                  \\
                           \displaybreak \notag  \\
               & = G(\bm{u}) + B(\bm{u}) + \quad \text{(expand by using factorization \eqref{eq:posterior-factorization})}                                                                                                                                                                                                                                                                                                                                                                 \\
               & \quad + \sum_{\bm{x}} q(\bm{x}|\bm{u}) \bigg( -\log q(\bm{x}|\bm{u}) - \log \tilde{p}(\bm{u}) \bigg) \notag                                                                                                                                                                                                                                                                                                                         \\
               & \quad + \sum_{\bm{x}} q(\bm{x}|\bm{u}) \sum_{\bm{y}, \theta} q(\bm{y}, \theta|\bm{x}) \bigg( \log q(\bm{y}|\bm{x}, \theta) - \log \tilde{p}(\bm{x}) \bigg) \notag                                                                                                                                                                                                                                                                                           \\
               & \quad + \sum_{\bm{x}} q(\bm{x}|\bm{u}) \sum_{\bm{y}} q(\bm{y}|\bm{x}) \sum_{\theta} q(\theta|\bm{y}, \bm{x}) \bigg( \log \frac{q(\theta|\bm{y}, \bm{x})}{q(\theta|\bm{x})} - \log \tilde{p}(\bm{y}, \bm{x}) \bigg) \notag                                                                                                                                                                                                           \\
               & = G(\bm{u}) + B(\bm{u}) + \quad \text{(move epistemic priors out of sums)}                                                                                                                                                                                                                                                                                                                                                          \\
               & \quad - \log \tilde{p}(\bm{u}) \underbrace{- \sum_{\bm{x}} q(\bm{x}|\bm{u}) \log q(\bm{x}|\bm{u})}_{=\ent{q(\bm{x}|\bm{u})}}   \notag                                                                                                                                                                                                                                                                                               \\
               & \quad + \sum_{\bm{x}}q(\bm{x}|\bm{u}) \bigg( - \log \tilde{p}(\bm{x}) + \underbrace{\sum_{\bm{y}, \theta} q(\bm{y}, \theta|\bm{x})  \log q(\bm{y}|\bm{x}, \theta)}_{=\Ex{q(\theta | \bm{x})}{-\ent{q(\bm{y} | \bm{x}, \theta)}}}\bigg)  \notag                                                                                                                                                                                                                                                 \\
               & \quad + \sum_{\bm{x}} q(\bm{x}|\bm{u}) \sum_{\bm{y}} q(\bm{y}|\bm{x}) \Bigg(- \log \tilde{p}(\bm{y}, \bm{x}) + \underbrace{\sum_{\theta} q(\theta|\bm{y}, \bm{x}) \bigg( \log \frac{q(\theta|\bm{y}, \bm{x})}{q(\theta|\bm{x})}  \bigg)}_{=\KL{q(\theta|\bm{y}, \bm{x})}{q(\theta|\bm{x})}} \Bigg) \notag                                                                                                                           \\                                                                                                                                                                                                                                                                                                                                                                                                                           
               & = G(\bm{u}) + B(\bm{u}) +      \label{eq:identify-constant-terms}                                                                                                                                                                                                                                                                                                                                                                   \\
               & \quad \underbrace{- \log \tilde{p}(\bm{u}) + \ent{q(\bm{x}|\bm{u})}}_{=\text{const if }\tilde{p}(\bm{u}) \propto \exp(\ent{q(\bm{x}|\bm{u})})}   \notag                                                                                                                                                                                                                                                                             \\
               & \quad + \sum_{\bm{x}}q(\bm{x}|\bm{u}) \bigg( \underbrace{- \log \tilde{p}(\bm{x}) - \Ex{q(\theta | \bm{x})}{\ent{q(\bm{y} | \bm{x}, \theta)}}}_{=\text{const if }\tilde{p}(\bm{x}) \propto \exp(-\Ex{q(\theta | \bm{x})}{\ent{q(\bm{y} | \bm{x}, \theta)}})}\bigg)  \notag                                                                                                                                                                                                                                \\
               & \quad + \sum_{\bm{x}} q(\bm{x}|\bm{u}) \sum_{\bm{y}} q(\bm{y}|\bm{x}) \Bigg( \underbrace{- \log \tilde{p}(\bm{y}, \bm{x}) + \KL{q(\theta|\bm{y}, \bm{x})}{q(\theta|\bm{x})}}_{=\text{const if }\tilde{p}(\bm{y}, \bm{x}) \propto \exp(\KL{q(\theta|\bm{y}, \bm{x})}{q(\theta|\bm{x})})} \Bigg) \notag                                                                                                                               \\
               & = G(\bm{u}) + \Ex{q(\bm{y}, \bm{x}, \theta|\bm{u})}{ \log \frac{q(\bm{y}, \bm{x}, \theta|\bm{u})}{p(\bm{y}, \bm{x}, \theta|\bm{u})}} + \text{constant} \quad \text{if \eqref{eq:epistemic-priors} holds.}
        \end{align}
    \end{subequations}
\end{proof}

The term in \eqref{eq:identify-constant-terms} works out as follows:
\begin{subequations}
    \begin{align}
         & - \log \tilde{p}(\bm{u}) + \ent{q(\bm{x}|\bm{u})} = \begin{cases} 0            & \text{if }\tilde{p}(\bm{u}) = \exp(\ent{q(\bm{x}|\bm{u})})                               \\
              \text{const} & \text{if }\tilde{p}(\bm{u}) = \sigma(\ent{q(\bm{x}|\bm{u})}) \label{eq:term-in-brackets}\end{cases}
    \end{align}
\end{subequations}
where $\text{const} = \log\left( \sum_{\bm{u}'} \exp\!\left(\ent{q(\bm{x}|\bm{u}')}\right)\right)$ and
\begin{equation}
    \sigma(\ent{q(\bm{x}|\bm{u})}) \triangleq \frac{\exp(\ent{q(\bm{x}|\bm{u})})}{\sum_{\bm{u}'} \exp(\ent{q(\bm{x}|\bm{u}')})}
\end{equation}
is the (normalized) softmax function, and similar derivations apply to the other epistemic priors in \eqref{eq:epistemic-priors}.

In the context of variational inference with Variational Free Energy $F[q]$ as in \eqref{eq:VFE-for-planning}, the normalization of $\tilde{p}(\bm{u})$ is inconsequential, as the additive constant does not affect the location of the minimum of $F[q]$. As long as $\tilde{p}(\bm{u}) \propto \exp(\ent{q(\bm{x}|\bm{u})})$, the results of VFE minimization will be the same. 

\clearpage
\section{Convergence Diagnostics}\label{appx:convergence}

This appendix investigates the practical optimization properties of the Variational Free Energy objectives introduced in \capsecref{sec:evaluation}.
We analyze convergence behavior, sensitivity to hyperparameters, and the emergence of epistemic behavior during optimization.
Results are presented for both the T-maze and Reactivity Maze environments.

\subsection{Convergence curves}\label{appx:convergence-curves}

\autoref{fig:convergence-losses} shows the Variational Free Energy as a function of optimization steps for the T-maze ($T=4$, left) and Reactivity Maze ($T=3$, right), each averaged over 20 episodes.
On the T-maze, all three objectives converge monotonically within approximately 500 iterations, with low variance across episodes (shaded $\pm 1$ standard deviation bands), indicating a well-behaved optimization landscape.
The ordering $F_{\text{marginal}} < F_{\text{planning}} < F_{\text{active}}$ reflects the non-negative epistemic prior contributions.
As shown by \citet[Equation 8]{lazaro-gredilla_what_2024}, the entropy correction terms in $F_{\text{planning}}$ are always non-negative, so $F_{\text{planning}}$ provides an upper bound on $F_{\text{marginal}}$.
The values are not directly comparable across objectives, as each minimizes a different functional.

On the Reactivity Maze, the same ordering holds, and all three objectives converge rapidly within the first 200 iterations.
Unlike the T-maze, $F_{\text{active}}$ does not exhibit a prolonged secondary phase, consistent with the observation that 100 optimization steps already suffice for 100\% success (\autoref{fig:emaze-convergence-budget}).

\begin{figure}[bh!]
    \centering
    \begin{subfigure}[b]{0.48\textwidth}
        \centering
        \includegraphics[width=\textwidth]{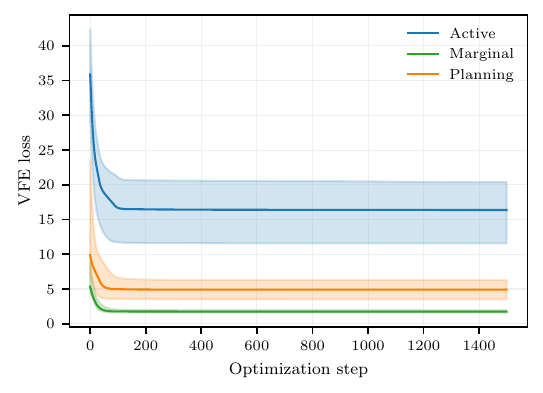}
    \end{subfigure}
    \hfill
    \begin{subfigure}[b]{0.48\textwidth}
        \centering
        \includegraphics[width=\textwidth]{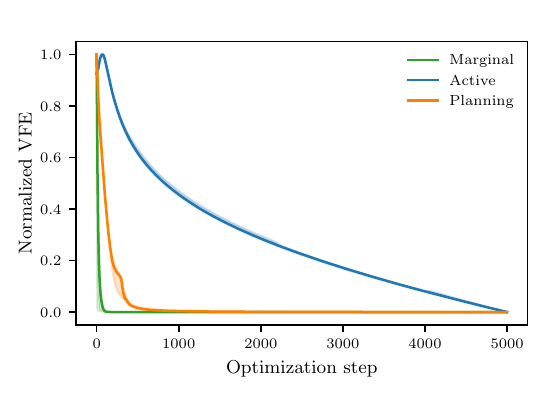}
    \end{subfigure}
    \caption{Variational Free Energy during optimization for each objective. T-maze (left), averaged over 20 episodes with $\pm 1$ standard deviation bands. Reactivity Maze (right), normalized per objective.}
    \label{fig:convergence-losses}
\end{figure}

\subsection{Two-phase optimization}\label{appx:two-phase}

\autoref{fig:tmaze-convergence-scenarios} compares the T-maze convergence curves for the first planning step under two conditions: unknown parameters ($\theta$ unknown, left) and known parameters ($\theta$ known, right).
When $\theta$ is unknown, the $F_{\text{active}}$ curve exhibits a notable two-phase structure.
In the first phase (steps 0--100), all objectives descend rapidly as the model fits the basic state and observation structure.
In the second phase (steps 1000--1200), $F_{\text{active}}$ undergoes a secondary, more gradual descent that is absent in $F_{\text{marginal}}$ and $F_{\text{planning}}$.
This secondary drop corresponds to the epistemic priors gradually steering the policy toward information-seeking actions, as confirmed by the policy stability analysis in \autoref{fig:tmaze-convergence-budget}.

When $\theta$ is known, this second phase largely vanishes: with no parameter uncertainty to resolve, the novelty-seeking prior contributes less, and the exploitation structure alone determines convergence.
This two-phase structure has practical implications for the required optimization budget, as discussed next.

\begin{figure}[bh!]
    \centering
    \includegraphics[width=\textwidth]{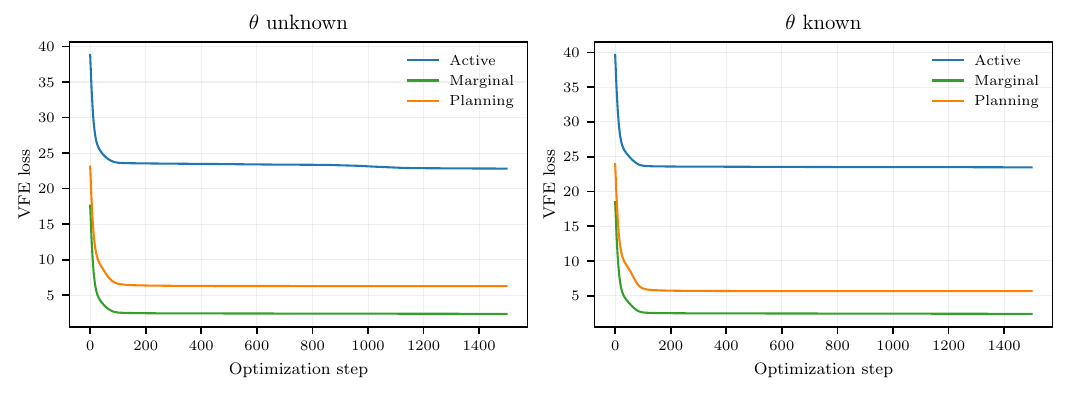}
    \caption{T-maze: convergence curves for the first planning step with unknown parameters (left) and known parameters (right). Under parameter uncertainty, $F_{\text{active}}$ exhibits a secondary descent (steps 1000--1200) corresponding to the emergence of epistemic behavior. This second phase is absent when $\theta$ is known.}
    \label{fig:tmaze-convergence-scenarios}
\end{figure}

\subsection{Optimization budget}\label{appx:budget}

\subsubsection{T-maze}

Although the VFE loss appears converged by approximately 500 iterations (\autoref{fig:convergence-losses}), task performance depends on a sufficient optimization budget.
\autoref{fig:tmaze-convergence-budget} (left) shows the success rate as a function of optimization steps.
$F_{\text{active}}$ requires approximately 1500 steps to achieve 100\% success, while $F_{\text{marginal}}$ reaches 75\% at around 1000 steps and $F_{\text{planning}}$ never exceeds 25\%.

The policy stability analysis (\autoref{fig:tmaze-convergence-budget}, right) reveals why: the probability of the cue-visiting action (South) for $F_{\text{active}}$ gradually increases from approximately 0.05 at 10 steps to 0.41 at 1500 steps, stabilizing thereafter.
This confirms that the VFE loss has a long, shallow tail that, while small in absolute terms, is behaviorally decisive.
The exploitation structure is learned quickly, but the exploration signal from the epistemic priors takes longer to dominate.
For $F_{\text{marginal}}$ and $F_{\text{planning}}$, $P(\text{South})$ remains near zero regardless of the optimization budget.

\begin{figure}[bh!]
    \centering
    \begin{subfigure}[b]{0.48\textwidth}
        \centering
        \includegraphics[width=\textwidth]{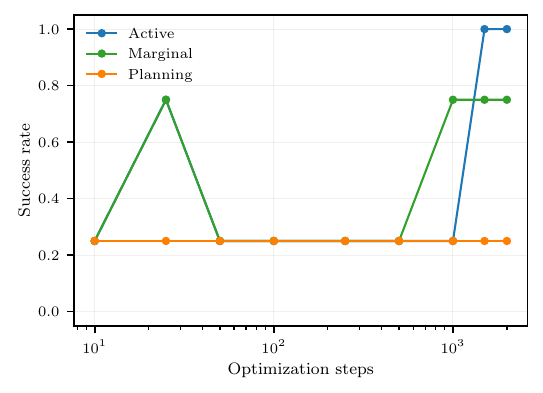}
    \end{subfigure}
    \hfill
    \begin{subfigure}[b]{0.48\textwidth}
        \centering
        \includegraphics[width=\textwidth]{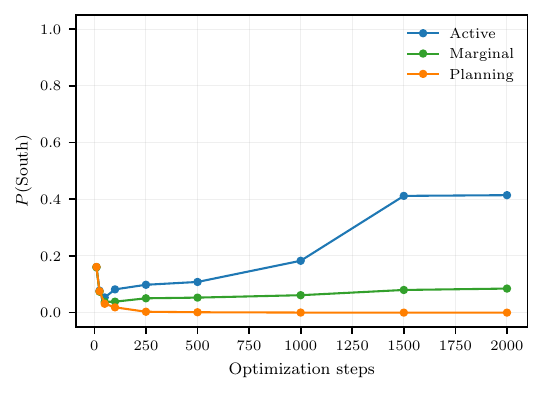}
    \end{subfigure}
    \caption{T-maze: success rate as a function of optimization steps (left) and probability of the cue-visiting action at the first time step (right). For $F_{\text{active}}$, the information-seeking policy emerges gradually and stabilizes at approximately 1500 steps, consistent with the two-phase convergence structure observed in \autoref{fig:tmaze-convergence-scenarios}.}
    \label{fig:tmaze-convergence-budget}
\end{figure}

\subsubsection{Reactivity Maze}

\autoref{fig:emaze-convergence-budget} (left) shows the optimization budget analysis for the Reactivity Maze.
$F_{\text{active}}$ achieves 100\% success with as few as 100 optimization steps, substantially fewer than the T-maze requires ($\sim$1500).
$F_{\text{marginal}}$ remains at 0\% success regardless of the budget, and $F_{\text{planning}}$ reaches at most 25\%.
The sharper transition for $F_{\text{active}}$ suggests that the Reactivity Maze presents a more clear-cut distinction between objectives: information seeking is both necessary and sufficient, and the optimization landscape resolves this quickly.

\begin{figure}[bh!]
    \centering
    \begin{subfigure}[b]{0.48\textwidth}
        \centering
        \includegraphics[width=\textwidth]{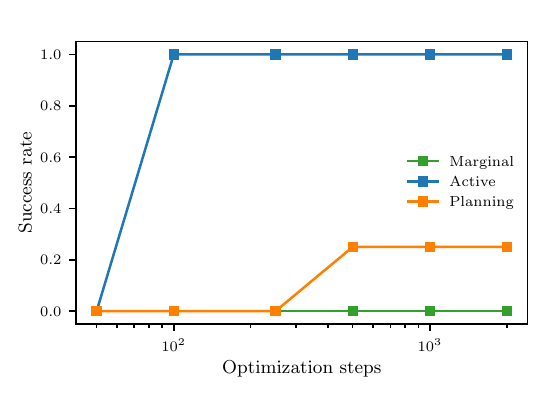}
    \end{subfigure}
    \hfill
    \begin{subfigure}[b]{0.48\textwidth}
        \centering
        \includegraphics[width=\textwidth]{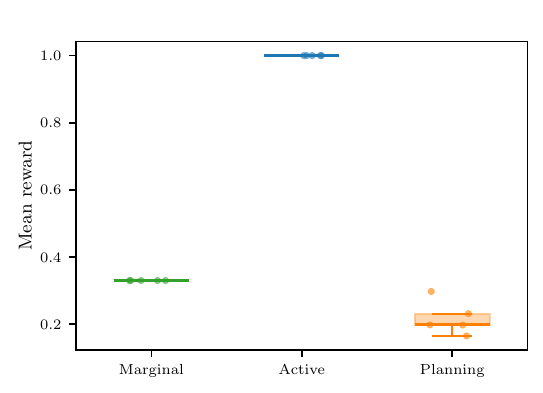}
    \end{subfigure}
    \caption{Reactivity Maze: success rate as a function of optimization steps (left) and mean reward across 5 random seeds (right). $F_{\text{active}}$ reaches 100\% success at only 100 steps and achieves perfect consistency ($1.0 \pm 0.0$) across seeds.}
    \label{fig:emaze-convergence-budget}
\end{figure}

\subsection{Learning rate sensitivity and robustness}\label{appx:lr}

\subsubsection{T-maze}

\autoref{fig:tmaze-convergence-lr-seeds} (left) shows the T-maze success rate as a function of learning rate (1500 optimization steps, 20 episodes).
$F_{\text{active}}$ achieves 100\% success at a learning rate of 0.05.
Smaller learning rates (0.001--0.01) prevent the slow epistemic phase from completing within the optimization budget, while a larger learning rate (0.1) causes the optimizer to overshoot, reducing performance to 75\%.
$F_{\text{marginal}}$ is more robust, achieving 75\% success at both low and high learning rates.
$F_{\text{planning}}$ remains at 25\% regardless of learning rate, as it lacks the goal-directed priors needed to solve the task.

\autoref{fig:tmaze-convergence-lr-seeds} (right) reports mean reward across 5 random seeds (20 episodes each).
$F_{\text{active}}$ achieves a mean reward of $1.0 \pm 0.0$: once the optimization budget is sufficient, behavior is perfectly consistent across seeds.
$F_{\text{marginal}}$ averages $0.14 \pm 0.22$ and $F_{\text{planning}}$ averages $-0.14 \pm 0.22$, both with substantial variance reflecting their reliance on chance rather than directed information seeking.

\begin{figure}[bh!]
    \centering
    \begin{subfigure}[b]{0.48\textwidth}
        \centering
        \includegraphics[width=\textwidth]{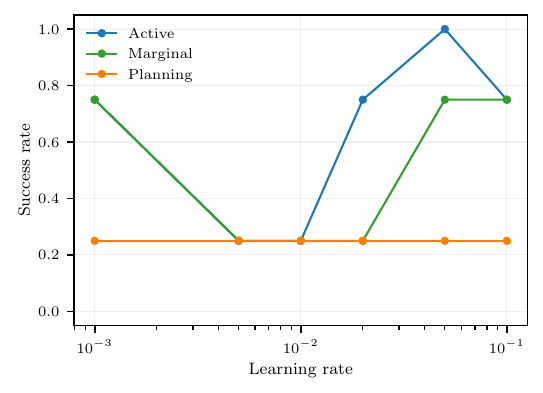}
    \end{subfigure}
    \hfill
    \begin{subfigure}[b]{0.48\textwidth}
        \centering
        \includegraphics[width=\textwidth]{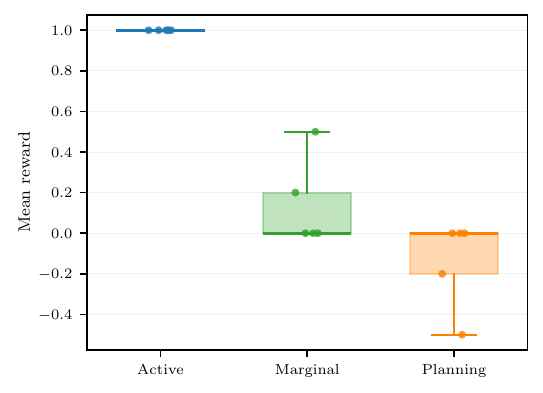}
    \end{subfigure}
    \caption{T-maze: success rate as a function of learning rate (left) and mean reward across 5 random seeds (right). $F_{\text{active}}$ achieves full success only at a learning rate of 0.05, but is perfectly consistent ($1.0 \pm 0.0$) across seeds once converged.}
    \label{fig:tmaze-convergence-lr-seeds}
\end{figure}

\subsubsection{Reactivity Maze}

In contrast, \autoref{fig:emaze-convergence-lr} shows that $F_{\text{active}}$ on the Reactivity Maze achieves 100\% success across all tested learning rates (0.005--0.1).
This robustness indicates that the T-maze sensitivity is environment-specific rather than a fundamental limitation of the approach.
$F_{\text{marginal}}$ remains at 0\% success and $F_{\text{planning}}$ reaches at most 40\% across all learning rates.
The Reactivity Maze seed variance is shown in \autoref{fig:emaze-convergence-budget} (right): $F_{\text{active}}$ achieves $1.0 \pm 0.0$, $F_{\text{marginal}}$ is consistent at the random baseline ($0.33 \pm 0.0$), and $F_{\text{planning}}$ averages $0.22 \pm 0.05$.

\begin{figure}[bh!]
    \centering
    \includegraphics[width=0.48\textwidth]{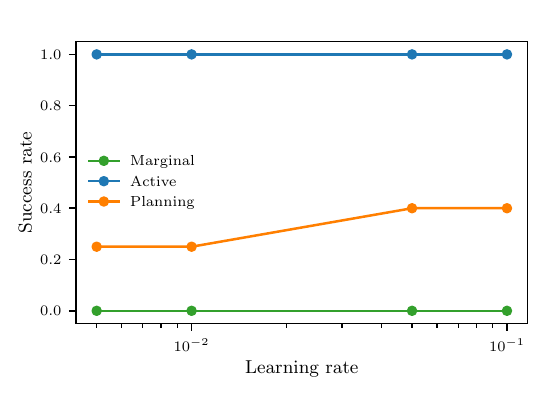}
    \caption{Reactivity Maze: success rate as a function of learning rate. $F_{\text{active}}$ achieves 100\% success across all tested learning rates, in contrast to the T-maze sensitivity.}
    \label{fig:emaze-convergence-lr}
\end{figure}

\clearpage
\section{MiniGrid Convergence Analysis}\label{appx:minigrid-convergence}

This appendix examines the convergence properties of the Variational Free Energy objectives on the MiniGrid DoorKey environment (\capsecref{sec:minigrid}), which presents a substantially larger state and observation space than the T-maze and Reactivity Maze.
We compare two optimizers, Adam \citep{kingma_adam_2015} and Adafactor \citep{shazeer_adafactor_2018}, across different knowledge conditions and planning horizons.
All experiments use an $8 \times 8$ grid ($6 \times 6$ internal grid) with $3 \times 3$ field of view, a learning rate of 0.01, and 3000 optimization steps per planning step.

\subsection{Knowledge conditions}\label{appx:minigrid-knowledge}

\autoref{fig:minigrid-convergence-knowledge} shows convergence trajectories for four knowledge scenarios of increasing prior information: \emph{tabula rasa} (no prior knowledge), \emph{known position} (agent knows its location), \emph{known layout} (agent knows the grid structure), and \emph{near goal} (agent starts close to the goal).
Each row corresponds to a different objective, with Adam (solid) and Adafactor (dashed) compared within each panel.

$F_{\text{marginal}}$ (top row) converges rapidly and identically for both optimizers across all scenarios, reaching its minimum within approximately 100 steps.
$F_{\text{active}}$ (middle row) exhibits qualitatively different behavior depending on the knowledge condition.
In the \emph{tabula rasa} and \emph{known layout} scenarios, the VFE \emph{increases} during optimization, in contrast to the monotonic descent observed in the T-maze and Reactivity Maze (\refappx{appx:convergence}).
This increase reflects the epistemic priors reshaping the policy toward information-seeking actions, which incur higher expected entropy costs.
In the \emph{known position} and \emph{near goal} scenarios, the VFE decreases monotonically, indicating that sufficient prior knowledge reduces the contribution of the epistemic terms.
Adafactor fails to converge in some active scenarios (notably \emph{known position}).
$F_{\text{planning}}$ (bottom row) shows gradual step-wise decreases, with Adam and Adafactor following different paths to similar final values.

The \emph{near goal} scenario produces substantially smaller loss magnitudes across all objectives, consistent with the reduced planning complexity when the agent starts close to its target.

\begin{figure}[bh!]
    \centering
    \includegraphics[width=\textwidth]{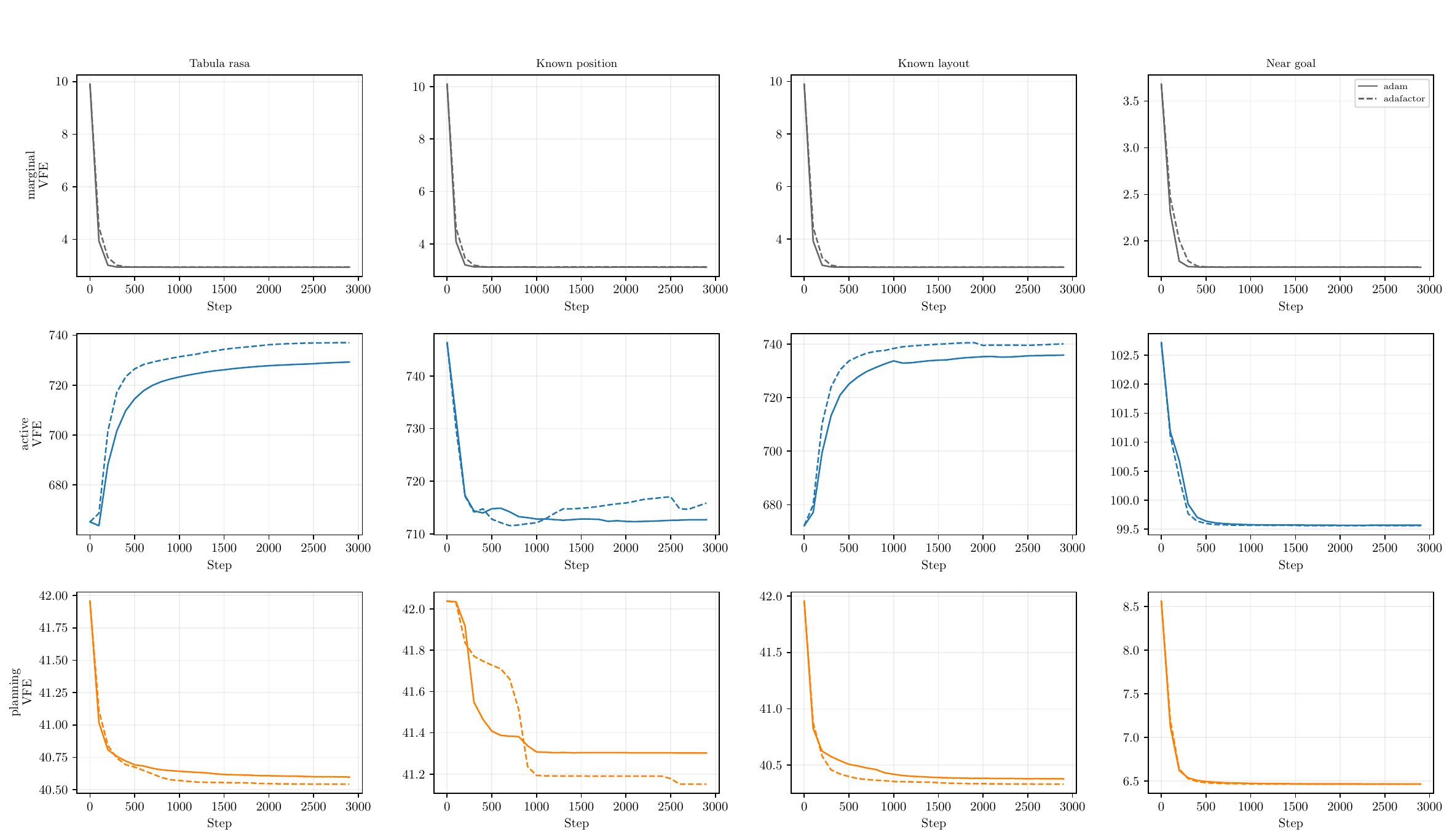}
    \caption{MiniGrid DoorKey: convergence trajectories across knowledge scenarios (columns) and objectives (rows), comparing Adam (solid) and Adafactor (dashed). $F_{\text{active}}$ exhibits non-monotonic behavior that depends on the knowledge condition, while $F_{\text{marginal}}$ converges rapidly and smoothly.}
    \label{fig:minigrid-convergence-knowledge}
\end{figure}

\subsection{Horizon scaling}\label{appx:minigrid-horizon}

\autoref{fig:minigrid-convergence-horizon} compares convergence at planning horizons $T = 20$ and $T = 40$ under the tabula rasa condition.
Doubling the horizon roughly doubles the loss magnitudes for $F_{\text{active}}$ and $F_{\text{planning}}$, as expected from the additional time steps in the objective.
$F_{\text{marginal}}$ scales cleanly, converging at similar rates for both horizons.
For $F_{\text{active}}$ at $T = 40$, Adafactor fails to converge within 3000 steps, while Adam still reaches a stable solution.
This suggests that Adam is the more reliable optimizer for the active objective, particularly at longer horizons where the epistemic prior contributions are larger.

\begin{figure}[bh!]
    \centering
    \includegraphics[width=0.85\textwidth]{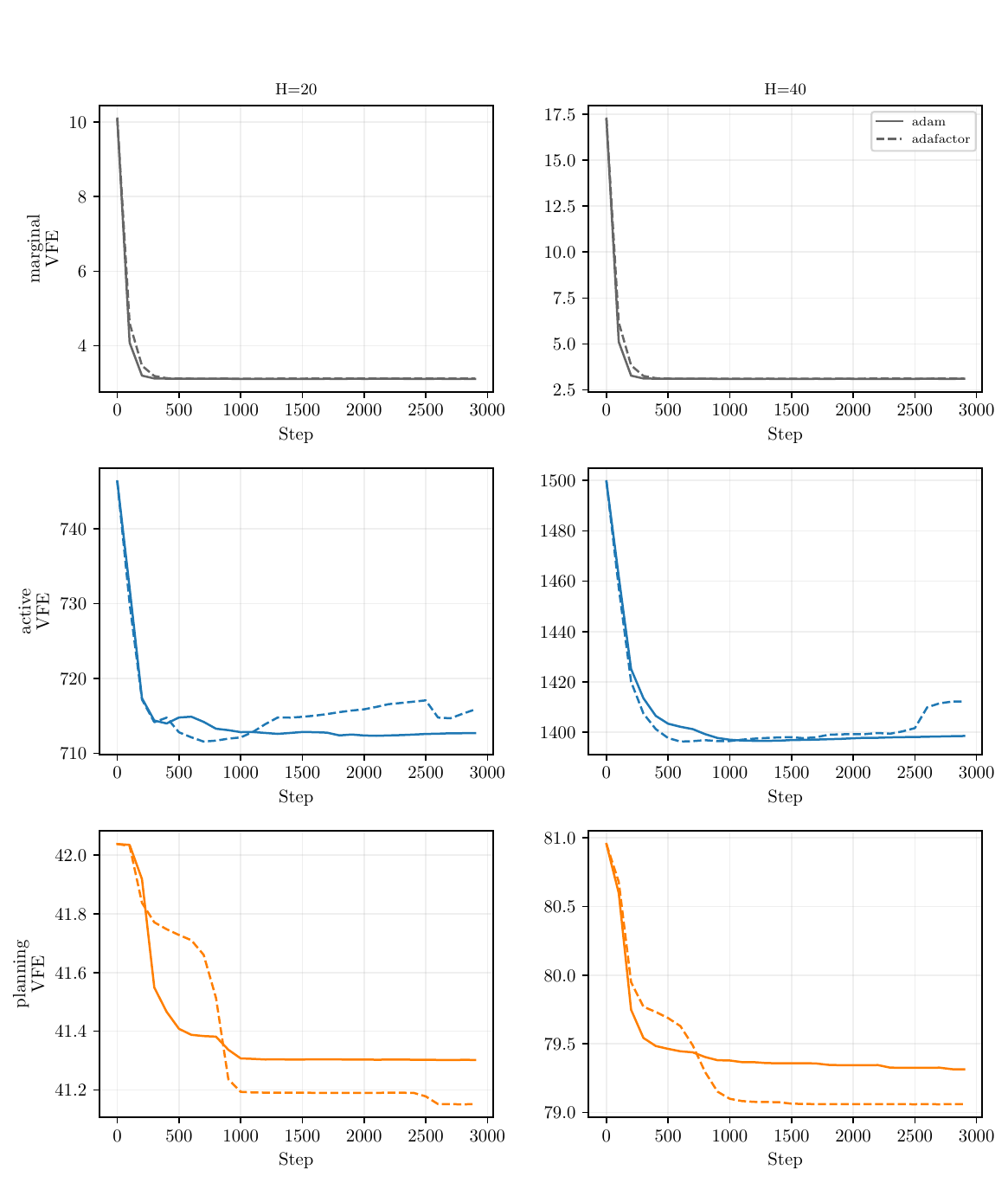}
    \caption{MiniGrid DoorKey: convergence trajectories at planning horizons $T = 20$ (left) and $T = 40$ (right), comparing Adam (solid) and Adafactor (dashed). Adafactor fails to converge for $F_{\text{active}}$ at $T = 40$.}
    \label{fig:minigrid-convergence-horizon}
\end{figure}

In summary, all objectives converge within 3000 optimization steps when using Adam, across all tested knowledge conditions and horizons.
The non-monotonic behavior of $F_{\text{active}}$ in certain knowledge conditions suggests a qualitatively different optimization landscape than the monotonic descent observed in the smaller environments (\refappx{appx:convergence}).
Adam is the preferred optimizer for the active objective due to its reliability, while both optimizers perform comparably for $F_{\text{marginal}}$ and $F_{\text{planning}}$.

\end{document}